\documentclass{article}

\usepackage{arxiv}

\usepackage{epstopdf}
\usepackage{amsthm}
\usepackage{hyperref}
\usepackage{url}
\usepackage{hyperref}
\usepackage{url}
\usepackage[utf8]{inputenc} 
\usepackage[T1]{fontenc}    
\usepackage{booktabs}       
\usepackage{amsfonts}       
\usepackage{nicefrac}       
\usepackage{microtype}      
\usepackage{mathtools}
\usepackage{amssymb}
\usepackage[shortlabels]{enumitem}
\usepackage{xcolor,colortbl}
\usepackage{comment}
\usepackage{subcaption}
\usepackage{graphicx}
\usepackage{makecell}
\usepackage{multirow}
\usepackage{float}
\usepackage{booktabs,caption}
\usepackage[flushleft]{threeparttable}
\usepackage{ulem}
\usepackage{algorithm}
\usepackage{algpseudocode}
\usepackage[T1]{fontenc}
\usepackage{lmodern}

\newtheorem{theorem}{Theorem}

\newtheorem{remark}{Remark}
\newtheorem{lemma}[theorem]{Lemma}

\newtheorem{assumption}{Assumption}

\numberwithin{theorem}{section}

\title{A Generalized McDiarmid Inequality: Sample Reuse and Non-Uniform Unbounded Differences}

\author{  Jeremiah Birrell\\
  Department of Mathematics\\
  Texas State University\\
  San Marcos, TX,  USA \\
  \texttt{jbirrell@txstate.edu} 
}

\begin{document}

\maketitle

\begin{abstract}
We develop a novel generalization of McDiarmid's concentration inequality and use it to obtain a uniform law of large numbers result and error bounds on stochastic optimization problems which simultaneously incorporate two key features: 1) They capture  
the benefit of training sample reuse in algorithms that also employ easily-sampled auxiliary random variables. 2) They allow for  one-component mean-difference bounding functions which are both unbounded in the component of interest and have  non-uniform dependence on another  component. We demonstrate the utility of our method by deriving concentration inequalities for score-based generative modeling. This application naturally leverages both key features of our new tools: 1) It utilizes auxiliary Gaussian random variables in conjunction with the training samples; accounting for training sample reuse   results in tighter error bounds by effectively integrating out the dependence on the auxiliary random variables. 2) The ability to handle non-uniform mean-difference bounding functions allows us to treat unbounded score models and data distributions with unbounded support.
\end{abstract}

\keywords{concentration inequalities \and  uniform law of large numbers  \and stochastic optimization \and denoising score matching  }

\section{Introduction}\label{sec:intro}
In this work we derive a novel generalization of McDiarmid's bounded differences concentration inequality  \cite{mcdiarmid1989method} which allows for a non-uniform one-component difference bounds and use this to derive  statistical error bounds on uniform law of large numbers (ULLN) and  stochastic optimization problems. Our focus is specifically on problems which inherently utilize two types of random variables:  samples from the population of interest, $X$, and easily sampled auxiliary random variables, $Y$. Such situations occur  in the training of score-based generative models via denoising score matching (DSM) \cite{vincent2011connection,songscore},   which utilizes auxiliary Gaussian random variables.

The ability to generate $Y$ samples at will  motivates us to consider the statistical error   of objectives which use $m$-times more $Y$-samples than $X$-samples, as in the following:
\begin{flalign}\label{eq:ULLN_intro}
\text{{\bf ULLN:}} &\qquad\qquad\qquad\qquad\qquad \sup_{\theta\in\Theta}\left\{\frac{1}{nm}\sum_{i=1}^n \sum_{j=1}^m g_\theta(X_i,Y_{i,j})-E_{P_X\times P_Y}[g_\theta]\right\}\,,&&
\end{flalign}
\begin{flalign}\label{eq:stoch_opt_intro}
\text{{\bf Stochastic Optimization:}}  &\qquad\qquad E_{P_X\times P_Y}[g_{\theta^*_{n,m}}]-\inf_{\theta\in\Theta} E_{ P_X\times P_Y}[g_\theta ]\,,\\
 &\qquad\qquad\theta^*_{n,m}\in\text{argmin}_\theta\frac{1}{nm}\sum_{i=1}^n\sum_{j=1}^m g_\theta(X_i,Y_{i,j})\,.&&\notag
\end{flalign}
The principle novelty of our methods lies in their ability to simultaneously exhibit the following properties:
\begin{enumerate}
\item Our results capture the benefit of using many $Y$-samples per $X$ sample; specifically,  for large $m$ our  bounds  reduce to  those obtained on the corresponding  problem with the $Y$-dependence integrated out, i.e., bounds on
\begin{align}\label{eq:ULLN_intro_Y_integrated}
  \sup_{\theta\in\Theta}\left\{\frac{1}{n}\sum_{i=1}^n   E_{P_Y}[g_\theta(X_i,\cdot)]-E_{P_X\times P_Y}[g_\theta]\right\}
\end{align}
and
\begin{flalign}\label{eq:stoch_opt_intro_Y_integrated}
&E_{P_X\times P_Y}[g_{\theta^*_{n,m}}]-\inf_{\theta\in\Theta} E_{ P_X\times P_Y}[g_\theta ]\,,\\
&\theta^*_{n,m}\in\text{argmin}_\theta\frac{1}{n}\sum_{i=1}^nE_{P_Y}[ g_\theta(X_i,\cdot)]\notag
\end{flalign}
respectively.
\item Our methods are capable of handling  unbounded objective functions and distributions with unbounded support.  Specifically, the ability to handle $X$-distributions with unbounded support   distinguishes the present work from that of \cite{birrell2025concentration}, which also studied the problems \eqref{eq:ULLN_intro}-\eqref{eq:stoch_opt_intro} but with more restrictive tools which did not allow for applications to, e.g., DSM with unbounded $X$-distribution, which we cover here.
\end{enumerate}
The main difficulty in achieving these goals lies in the interaction between  a single $X_i$ sample  and  multiple distinct $Y_{i,j}$ samples  in \eqref{eq:ULLN_intro} and \eqref{eq:stoch_opt_intro}. The new variant of   McDiarmid's inequality that we develop is specifically adapted to this setting.  It produces concentration inequalities which depend on moment generating function (MGF) bounds in $Y$ which are non-uniform in   $X$. To facilitate explicit computations, we also develop a new technique for   computing such MGF bounds  via Orlicz norms.

\subsection{Summary of the Main Results}\label{sec:summary}
The following is a more detailed summary of our main results.
\begin{enumerate}
\item In Theorem \ref{thm:gen_McDiarmid_non_unif_MGF} we derive a novel generalization of McDiarmid's inequality which allows for one-component mean-difference bounds (and corresponding MGF bounds) that are non-uniform in another variable.
\item In Lemma \ref{lemma:Orlicz_cosh_MGF_bound_sum} we provide a   tool for obtaining explicit MGF bounds which appropriately handle the non-uniform dependence on other variables. This is achieved by breaking the bounding function into multiple terms, each of which can be paired with an appropriate Orlicz norm based on its tail behavior.
\item  In Theorem  \ref{thm:ULLN_bound_non_unif_MGF_Phi1_Phi2_case} we show how  Theorem \ref{thm:gen_McDiarmid_non_unif_MGF} and  Lemma \ref{lemma:Orlicz_cosh_MGF_bound_sum} can be combined to yield an explicit concentration inequality when the non-uniform bounding functions are appropriately composed of sub-Gaussian and sub-exponential terms.
\item In Section \ref{sec:ULLN_stoch_opt} we use  Theorem  \ref{thm:ULLN_bound_non_unif_MGF_Phi1_Phi2_case} to obtain  concentration inequalities for ULLN and stochastic optimization problems.  In cases with sample reuse, these results capture the desired large $m$ behavior that was discussed qualitatively above; see Remarks \ref{remark:ULLNlarge_m_limit} and \ref{remark:C_nm_behavior}.
\item Finally, in Section \ref{sec:DSM} we show that, with  these new tools in hand, it is straightforward to   derive concentration inequalities for score-based generative models that capture the benefits of sample reuse (many Gaussian samples per training sample) while simultaneously allowing for an unbounded score model and a data distribution with unbounded support.
\end{enumerate}

\subsection{Related Works}\label{sec:related_works}

The generalization of McDiarmid's original bounded differences concentration inequality  \cite{mcdiarmid1989method}  that we develop in this paper can be viewed as an extension of the previous works  \cite{kontorovich2014concentration} and \cite{birrell2025concentration}, both of which relaxed the uniform boundedness condition. In  \cite{kontorovich2014concentration}  the theory was extended to one-component Lipschitz bounds on a space with finite sub-Gaussian-diameter.
  In  \cite{birrell2025concentration}  the theory was further extended   to non-Lipschitz one-component difference bounds  which are non-uniform in subsequent variables in the chosen ordering of the components or, alternatively, to mean-difference bounds, i.e., with the dependence on subsequent variables integrated out.   Related results, applicable to   sub-Gaussian and sub-exponential one-component difference bounds, were also obtained in  \cite{maurer2021concentration} by an entropy-method approach.    Further, though less closely  related works, include  \cite{combes2024extension}, which considered bounded differences on a high probability set, and \cite{warnke2016method}, which obtained tighter bounds in cases where the worst-case one-component differences are large (though still uniformly bounded) but the typical differences are small.

 The key innovation of the present work is a generalization   to one-component difference bounds that, in addition to being unbounded, are   non-uniform in  subsequent variables as well as  in particular earlier variables in the chosen variable ordering; this allows for applications to problems of the form  \eqref{eq:ULLN_intro} and \eqref{eq:stoch_opt_intro} where $g_\theta(x,y)$ has $x$-difference bounds that are non-uniform and  unbounded in $y$ and also has $y$-difference bounds that are non-uniform and unbounded in $x$.  The aforementioned previous approaches were not capable of handling this type of behavior.  We do note that the constants in the classical  McDiarmid inequality  are better than those in the present work, as well as the other extensions to unbounded differences cited here, when one only has uniform bounds on one-component differences; this fact was previously discussed in    \cite{kontorovich2014concentration,birrell2025concentration} and the same discussion applies here.  The key advantage of our method  lies in the ability to treat a more general class of unbounded non-uniform one-component differences than could previously be handled. Table \ref{table:comparison} summarizes the key differences between our method, the classical McDiarmid inequality, and other extensions which apply to unbounded one-component differences.

\begin{table}[h!]
\centering
\caption{Comparison of our Theorem \ref{thm:gen_McDiarmid_non_unif_MGF} with the generalized McDiarmid result from \cite{birrell2025concentration},  the entropy-based approach of \cite{maurer2021concentration}, the sub-Gaussian diameter method from \cite{kontorovich2014concentration}, and McDiarmid's inequality  \cite{mcdiarmid1989method}.}\label{table:comparison}
\setlength{\extrarowheight}{2pt}
\begin{tabular}{|l|c|c|c|c|}
\hline
Method & \makecell{Tail behaviors\\covered} & \makecell{Allows non-uniform\\dependence on\\earlier variables} & \makecell{Allows non-uniform\\dependence on\\later variables} & \makecell{Tightest\\constants} \\
\hline\hline
Theorem \ref{thm:gen_McDiarmid_non_unif_MGF} & \makecell{General MGF or\\ Orlicz-norm bound} & yes & yes & no \\
\hline
 \makecell{Generalized \\McDiarmid \cite{birrell2025concentration}} &  \makecell{General MGF or\\ Orlicz-norm bound} & no & yes & no \\
\hline
Entropy method \cite{maurer2021concentration} &  \makecell{sub-Gaussian or\\ sub-exponential} & no & no & no \\
\hline
 \makecell{Sub-Gaussian\\ diameter \cite{kontorovich2014concentration}} & sub-Gaussian & no & no & no \\
\hline
 \makecell{McDiarmid's\\ inequality \cite{mcdiarmid1989method}} & compact support & no & no & yes \\
\hline
\end{tabular}
\end{table}

To demonstrate the utility of our results, in Section \ref{sec:DSM} we apply them to   score-based generative models  \cite{songscore}; specifically, we obtain statistical error bounds, in the form of concentration inequalities, on the denoising score matching objective \cite{vincent2011connection}.   A number of works have derived   convergence results or error bounds for score-based generative models under various assumptions \cite{chen2023improved,pedrotti2023improved,bentonnearly,li2024towards,li2024sharp,potaptchik2024linear,chen2024equivariant,mimikos2024score,gao2025wasserstein}. Closest to our work on DSM are: 1) the  $L^1$ error bounds and bounds on the estimation error in the form of expected total variation and $1$-Wasserstein distance in \cite{oko2023diffusion}, 2) the use of a weaker generalization of McDiarmid's theorem in \cite{birrell2025concentration}  to obtain concentration inequalities for DSM. The application to DSM in this work is distinguished by our simultaneous treatment of Gaussian sample reuse, unbounded score models, and data distributions with unbounded support; none of the aforementioned methods were capable of simultaneously exhibiting all of these features.  In particular, the previous works \cite{oko2023diffusion,birrell2025concentration} only treated data distributions with compact support. Thus our results substantially extend the domain of applicability of the theory.

\section{A Generalized McDiarmid Inequality via  Non-Uniform MGF Bounds}\label{sec:generalized_mcdiarmid_var_reuse_mean_bound_nonunif_method}

In this section we develop our new variant of McDiarmid's inequality which allows for non-uniform mean-differences bounds.  We will focus on one specific form of non-uniformity, where the variables can be ordered and partitioned into groups such that the earlier-variable non-uniformity is all with respect to the first variable of each grouping, while any later-variable non-uniformity is integrated out.   This  is the relevant setting for problems with dual sample types and sample reuse, as in \eqref{eq:ULLN_intro} and \eqref{eq:stoch_opt_intro}.
\begin{assumption}\label{asump:gen_McDiarmid_non_unif_MGF}
Let $n\in\mathbb{Z}^+$, $m_i\in \mathbb{Z}^+$ for $i=1,...,n$. Suppose we have measurable spaces $ \mathcal{X}_{i}$, $i=1,...,n$, $\mathcal{Y}_{i,j}$, $i=1,...,n$, $j=1,...,m_i$; define $\mathcal{Z}_i\coloneqq \mathcal{X}_i\times \prod_{j=1}^{m_i} \mathcal{Y}_{i,j}$   and $\mathcal{Z}\coloneqq\prod_{i=1}^n\mathcal{Z}_i$  (with the product $\sigma$-algebras).

Suppose we have a probability space $(\Omega,\mathbb{P})$ on which we have $\mathcal{X}_i$-valued random variables $X_i$, $i=1,...,n$, and $\mathcal{Y}_{i,j}$-valued random variables $Y_{i,j}$, $i=1,...,n$, $j=1,...,m_i$, that are all mutually independent. We let $P_{X_i}$ denote the  distribution of $X_i$ and $P_{Y_{i,j}}$ denote the distribution of $Y_{i,j}$. We will also make use of the definitions $Y_i^{k:\ell}\coloneqq (Y_{i,k},Y_{i,k+1},...,Y_{i,\ell})$,  $Z_i\coloneqq (X_i,Y_{i,1},....,Y_{i,m_i})$, $Z^{k:\ell}\coloneqq (Z_k,Z_{k+1},...,Z_\ell)$ (the case where $k>\ell$ simply means this term is to be omitted) and $Z\coloneqq (Z_1,...,Z_n)$. We use similar lower-case notation for elements of the product spaces that correspond to the codomains of these random variables.

Suppose also that we have a measurable function $\phi:\mathcal{Z}\to\mathbb{R}$ which satisfies the following:
\begin{enumerate}
\item Integrability: $\phi(Z)\in L^1(\mathbb{P})$ and for  $i=1,...,n$, $\ell=0,...,m_i$, we have
\begin{align}
&\phi(z^{1:i-1},x_i,y_i^{1:\ell},Y_i^{\ell+1:m_i},Z^{i+1:n})\in L^1(\mathbb{P})
\end{align}
  for all $z^{1:i-1},x_i, y_i^{1:\ell}$.
\item One-component  mean-difference bounds for $\mathcal{X}$:  For  $i=1,...,n$, we have measurable $h_{\mathcal{X}_i}:\mathcal{X}_i\times \mathcal{X}_i\to [0,\infty]$ such that
\begin{align}\label{eq:X_mean_diff_bound}
\left|\mathbb{E}[\phi(z^{1:i-1},x_i,Y_i^{1:m_i},Z^{i+1:n})]-\mathbb{E}[\phi(z^{1:i-1},\tilde{x}_i,Y_i^{1:m_i},Z^{i+1:n})]\right|\leq h_{\mathcal{X}_i}(x_i,\tilde{x}_i)
\end{align}
for all $z^{1:i-1},x_i,\tilde{x}_i$.
 \item One-component  mean-difference bounds for $\mathcal{Y}$:  For all $i=1,...,n$, $j=1,...,m_i$, we have measurable $h_{\mathcal{Y}_{i,j}}:\mathcal{X}_i\times\mathcal{Y}_{i,j}\times \mathcal{Y}_{i,j}\to [0,\infty]$ such that
\begin{align}\label{eq:Y_mean_diff_bound}
&\left|\mathbb{E}[\phi(z^{1:i-1},x_i,y_i^{1:j},Y_i^{j+1:m_i},Z^{i+1:n})]-\mathbb{E}[\phi(z^{1:i-1},x_i,y_i^{1:j-1},\tilde{y}_{i,j},Y_i^{j+1:m_i},Z^{i+1:n})]\right|\\
\leq&  h_{\mathcal{Y}_{i,j}}(x_i,y_{i,j},\tilde{y}_{i,j})\notag
\end{align}
for all $z^{1:i-1},x_i,y_i^{1:j},\tilde{y}_{i,j}$.
\end{enumerate}
\end{assumption}
\begin{remark}
We emphasize that our results are still applicable when there is no sample reuse, i.e., when $m_i=1$ for all $i$, though that is not our   focus.
\end{remark}

We now proceed to derive the our generalization of McDiarmid's inequality. Here and in the following we will use $P^k$ to denote the $k$-fold product of a probability measure $P$.
\begin{theorem}\label{thm:gen_McDiarmid_non_unif_MGF}
Under Assumption \ref{asump:gen_McDiarmid_non_unif_MGF}, suppose we also have the following MGF  bounds:
\begin{enumerate}
\item $\mathcal{X}$-MGF bounds: We have $\xi_{\mathcal{X}_i}:[0,\infty)\to[0,\infty]$, $i=1,...,n$, such that
\begin{align}\label{eq:gen_McDiarmid_non_unif_X_MGF_assump}
\int \cosh(\lambda  h_{\mathcal{X}_i}) dP_{{X}_i}^2\leq \exp(\xi_{\mathcal{X}_i}(|\lambda|)) \,\,\,\text{  for all $\lambda\in\mathbb{R}$.}
\end{align}

\item Non-uniform $\mathcal{Y}$-MGF bounds: We have $\xi_{\mathcal{Y}_{i,j}}:\mathcal{X}_i\times [0,\infty)\to[0,\infty]$, $i=1,...,n$, $j=1,...,m_i$, such that  $\xi_{\mathcal{Y}_{i,j}}(\cdot,\lambda)$ is measurable for all  $\lambda \in[0,\infty)$ and   
\begin{align}\label{eq:gen_McDiarmid_non_unif_Y_MGF_assump}
\int \cosh(\lambda  h_{\mathcal{Y}_{i,j}}(x_i,\cdot,\cdot)) dP_{{Y}_{i,j}}^2\leq \exp\left(\xi_{\mathcal{Y}_{i,j}}(x_i,|\lambda|)\right) \,\,\,\text{  for all  $x_i\in\mathcal{X}_i$, $\lambda\in\mathbb{R}$.}
\end{align}
\end{enumerate}
Here and in the following we use the definitions $0\cdot \infty\coloneqq0$, $\cosh(\pm\infty)\coloneqq\infty$.

 For  $i=1,...,n$, let $p_i\in[1,\infty)$ and let $q_i$ be the conjugate exponent to $p_i$.  Then for all $\lambda\in\mathbb{R}$ we have the MGF bound
 \begin{align}
 &\mathbb{E}\left[\exp\left(\lambda(\phi(Z)-\mathbb{E}[\phi(Z)])\right)\right]
 \leq \exp\left(\sum_{i=1}^np_i^{-1}\xi_{X_i}(|\lambda p_i|)\right)\prod_{i=1}^n\left\|\exp\left(\sum_{j=1}^{m_i}\xi_{\mathcal{Y}_{i,j}}(\cdot,|\lambda|)\right)\right\|_{L^{q_i}(P_{X_i})}
 \end{align}
 and for all $\epsilon>0$ we have the concentration inequality
 \begin{align}\label{eq:gen_McDiarmid_non_unif_conc_ineq}
 &\mathbb{P}\left(\pm(\phi(Z)-\mathbb{E}[\phi(Z)])\geq \epsilon\right)\\
 \leq& \inf_{\lambda\geq 0}\left\{  \exp\left(-\lambda \epsilon+\sum_{i=1}^np_i^{-1}\xi_{X_i}(\lambda p_i)\right)\prod_{i=1}^n\left\|\exp\left(\sum_{j=1}^{m_i}\xi_{\mathcal{Y}_{i,j}}(\cdot,\lambda)\right)\right\|_{L^{q_i}(P_{X_i})}\right\}\,.\notag
 \end{align}
\end{theorem}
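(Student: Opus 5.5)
We would follow the classical martingale (Doob) decomposition behind McDiarmid's inequality, using the variable ordering $X_1,Y_{1,1},\dots,Y_{1,m_1},X_2,\dots$. Define the filtration $\mathcal{F}_{i,\ell}$ generated by $Z^{1:i-1},X_i,Y_i^{1:\ell}$ ($\ell=0,\dots,m_i$), with $\mathcal{F}_{i,-1}$ generated by $Z^{1:i-1}$. By independence and the integrability assumption, the conditional expectations are the partially integrated functions:
\begin{align*}
\mathbb{E}[\phi(Z)|\mathcal{F}_{i,\ell}]=\mathbb{E}[\phi(z^{1:i-1},x_i,y_i^{1:\ell},Y_i^{\ell+1:m_i},Z^{i+1:n})]\big|_{\text{plug in }Z^{1:i-1},X_i,Y_i^{1:\ell}}\,.
\end{align*}
We then write $\phi(Z)-\mathbb{E}[\phi(Z)]=\sum_{i}\bigl(V_i+\sum_{j}W_{i,j}\bigr)$. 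Here $V_i$ is the increment from $\mathcal{F}_{i,-1}$ to $\mathcal{F}_{i,0}$ (revealing $X_i$), and $W_{i,j}$ is the increment from $\mathcal{F}_{i,j-1}$ to $\mathcal{F}_{i,j}$.

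\textbf{Bounding a single increment.} We use the symmetrization trick of \cite{birrell2025concentration}. Write $V_i=\int (f(X_i)-f(\tilde x))\,dP_{X_i}(\tilde x)$, where $f$ is the partially integrated function with $z^{1:i-1}$ frozen. Jensen's inequality gives
\[
\mathbb{E}[e^{\lambda V_i}|\mathcal{F}_{i,-1}]\le \int e^{\lambda(f(x)-f(\tilde x))}\,dP_{X_i}^2\,.
\]
Since the integrand is antisymmetric under $x\leftrightarrow\tilde x$, we may replace $e^{\lambda(\cdot)}$ by $\cosh(\lambda(\cdot))$. Monotonicity of $\cosh$ in $|\cdot|$ together with \eqref{eq:X_mean_diff_bound} then gives the bound $\exp(\xi_{\mathcal{X}_i}(|\lambda|))$, uniformly in the past. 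The same argument applied to $W_{i,j}$, using \eqref{eq:Y_mean_diff_bound}, gives
\[
\mathbb{E}[e^{\lambda W_{i,j}}|\mathcal{F}_{i,j-1}]\le \exp(\xi_{\mathcal{Y}_{i,j}}(X_i,|\lambda|))\,,
\]
which is non-uniform only through $X_i$, and $X_i$ is $\mathcal{F}_{i,j-1}$-measurable. Some care is needed when the bounds are infinite; the conventions $0\cdot\infty=0$ and $\cosh(\pm\infty)=\infty$ handle this.

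\textbf{Iterating within block $i$.} Condition successively on $\mathcal{F}_{i,m_i-1},\dots,\mathcal{F}_{i,0}$, pulling out the $\mathcal{F}_{i,j-1}$-measurable factors each time. This yields
\[
\mathbb{E}\bigl[e^{\lambda \sum_j W_{i,j}}\,G\,\big|\,\mathcal{F}_{i,0}\bigr]\le G\,\exp\Bigl(\sum_j\xi_{\mathcal{Y}_{i,j}}(X_i,|\lambda|)\Bigr)
\]
for any $G\ge 0$ that is $\mathcal{F}_{i,0}$-measurable.

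\textbf{The main obstacle.} The remaining step, conditioning on $\mathcal{F}_{i,-1}$, requires bounding $\mathbb{E}[e^{\lambda V_i}A(X_i)\,|\,\mathcal{F}_{i,-1}]$ with $A(x)=\exp(\sum_j\xi_{\mathcal{Y}_{i,j}}(x,|\lambda|))$. Here both factors depend on $X_i$. We apply H\"older's inequality with exponents $p_i,q_i$ conditionally; this is valid because $X_i$ is independent of $\mathcal{F}_{i,-1}$ and $V_i$ is a function of $(Z^{1:i-1},X_i)$. This gives
\[
\mathbb{E}[e^{\lambda p_iV_i}|\mathcal{F}_{i,-1}]^{1/p_i}\,\|A\|_{L^{q_i}(P_{X_i})}\le \exp\bigl(p_i^{-1}\xi_{\mathcal{X}_i}(|\lambda p_i|)\bigr)\,\|A\|_{L^{q_i}(P_{X_i})}\,,
\]
where the inequality is the single-increment bound applied with $\lambda p_i$ in place of $\lambda$. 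The right-hand side is deterministic, so the blocks decouple.

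Inducting backward over $i=n,\dots,1$ with the tower property gives the stated MGF bound. The case $p_i=1$, $q_i=\infty$ reads $\|A\|_{L^\infty}$ and is handled identically. Finally, the concentration inequality \eqref{eq:gen_McDiarmid_non_unif_conc_ineq} follows from Markov's (Chernoff) inequality applied to $\pm\lambda(\phi-\mathbb{E}\phi)$, $\lambda\ge0$, followed by taking the infimum over $\lambda$. The bound is symmetric in the sign because all the bounds depend only on $|\lambda|$.
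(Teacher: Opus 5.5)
Your proposal is correct and follows essentially the same route as the paper. Both use the Doob decomposition in the order $X_i, Y_{i,1},\dots,Y_{i,m_i}$, the Jensen-plus-$\cosh$ symmetrization bound on each increment, and iteration within each block. Both then apply H\"older with exponents $(p_i,q_i)$ to separate the $X_i$-increment from the non-uniform factor $\exp\bigl(\sum_j \xi_{\mathcal{Y}_{i,j}}(X_i,|\lambda|)\bigr)$, followed by induction over blocks and a Chernoff bound. The only difference is presentation: you use filtrations and conditional expectations, whereas the paper freezes $z^{1:i-1}$ and integrates explicitly against $P_{X_i}$ and $P_{Y_{i,j}}$.
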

\begin{remark} 

Note that   \eqref{eq:gen_McDiarmid_non_unif_X_MGF_assump} is equivalent to a MGF bound for $\zeta h_{\mathcal{X}_i}(x_i,\tilde x_i)$ under the distribution $d\zeta\times P_{X_i}(dx_i)\times P_{X_i}(d\tilde{x}_i)$, where $d \zeta$ is the uniform distribution on $\{-1,1\}$, and similarly for \eqref{eq:gen_McDiarmid_non_unif_Y_MGF_assump}.  Thus we   refer to  \eqref{eq:gen_McDiarmid_non_unif_X_MGF_assump}-\eqref{eq:gen_McDiarmid_non_unif_Y_MGF_assump} as ($\cosh$-form) MGF bounds.

We emphasize that the key innovation of the present result, as compared to the related works discussed in Section \ref{sec:related_works}, is the ability to treat  (unbounded) non-uniform dependence of the mean-difference bound for $\mathcal{Y}$ \eqref{eq:Y_mean_diff_bound} on the corresponding $\mathcal{X}$-variable (with corresponding non-uniform MGF bound \eqref{eq:gen_McDiarmid_non_unif_Y_MGF_assump}).

\end{remark}
\begin{proof}
Define $\phi_n\coloneqq\phi$,    $\phi_0\coloneqq \mathbb{E}[\phi(Z)]$,
\begin{align}
&\phi_{i-1}(z^{1:i-1})\coloneqq\mathbb{E}[\phi(z^{1:i-1},Z^{i:n})]\,,\\
&\phi_{i,\ell}(z^{1:i-1},x_i,y_i^{1:\ell})\coloneqq\mathbb{E}[\phi(z^{1:i-1},x_i,y_i^{1:\ell},Y_i^{\ell+1:m_i},Z^{i+1:n})]\,,\notag
\end{align}
and note that  $\phi_{i,m_i}=\phi_{i}$ for all $i$; the integrability assumptions imply these are all real-valued.  For $i=1,...,n$ define $V_i\coloneqq \phi_{i}-\phi_{i-1}$ and  $V_{i,0}\coloneqq \phi_{i,0}-\phi_{i-1}$. For $i=1,...,n$, $j=1,...,m_i$, define $V_{i,j}\coloneqq\phi_{i,j}-\phi_{i,j-1}$, $i=1,...,n$; note that $V_{i}$ depends on $z^{1:i}$ and $V_{i,j}$ depends on $(z^{1:i-1},x_i,y_i^{1:j})$. We have the telescoping sums
\begin{align}
\phi-\mathbb{E}[\phi(Z)]=\sum_{i=1}^nV_i\,,\,\,\,\,\,
 V_i= \sum_{j=0}^{m_i}V_{i,j} \text{ for }i=1,...,n\,.
\end{align}

The assumed  one-component  mean-difference bounds for $\mathcal{Y}$ imply
 \begin{align}
 \left|\phi_{i,j}(z^{1:i-1},x_i,y_i^{1:j-1},y_{i,j})- 
 \phi_{i,j}(z^{1:i-1},x_i,y_i^{1:j-1},\tilde{y}_{i,j}) 
 \right|\leq  h_{\mathcal{Y}_{i,j}}(x_i,y_{i,j},\tilde{y}_{i,j})\,.
 \end{align}

For all $i=1,...,n$, $j=1,...,m_i$, and all $\lambda\in\mathbb{R}$ we can use Jensen's inequality, symmetrization, and properties of the hyperbolic cosine to compute
\begin{align}
 &\mathbb{E}\left[\exp\left(\lambda V_{i,j}(z^{1:i-1},x_i,y_i^{1:j-1},Y_{i,j})\right)\right]\\
 =&\mathbb{E}\left[\exp\left(\lambda (\phi_{i,j}(z^{1:i-1},x_i,y_i^{1:j-1},Y_{i,j})-\phi_{i,j-1}(z^{1:i-1},x_i,y_i^{1:j-1}))\right)\right]\notag\\
  =&\int \exp\left(\lambda \left(\phi_{i,j}(z^{1:i-1},x_i,y_i^{1:j-1},y_{i,j})-\int \phi_{i,j}(z^{1:i-1},x_i,y_i^{1:j-1},\tilde{y}_{i,j}) P_{Y_{i,j}}(d\tilde{y}_{i,j})\right)\right)P_{Y_{i,j}}(dy_{i,j})\notag\\
    \leq&\int \int\exp\left(\lambda \left(\phi_{i,j}(z^{1:i-1},x_i,y_i^{1:j-1},y_{i,j})- \phi_{i,j}(z^{1:i-1},x_i,y_i^{1:j-1},\tilde{y}_{i,j}) \right)\right) P_{Y_{i,j}}(d\tilde{y}_{i,j})P_{Y_{i,j}}(dy_{i,j})\notag\\
        =&\int \int\cosh\left(\lambda \left(\phi_{i,j}(z^{1:i-1},x_i,y_i^{1:j-1},y_{i,j})- \phi_{i,j}(z^{1:i-1},x_i,y_i^{1:j-1},\tilde{y}_{i,j}) \right)\right) P_{Y_{i,j}}(d\tilde{y}_{i,j})P_{Y_{i,j}}(dy_{i,j})\notag\\
        \leq&\int \int\cosh\left(|\lambda|  h_{\mathcal{Y}_{i,j}}(x_i,y_{i,j},\tilde{y}_{i,j})\right) P_{Y_{i,j}}(d\tilde{y}_{i,j})P_{Y_{i,j}}(dy_{i,j})\notag\\
        \leq&   \exp\left(\xi_{\mathcal{Y}_{i,j}}(x_i,|\lambda|)\right)\,.\notag
\end{align}
Next, by employing the above along with  induction on $\ell$ we obtain
\begin{align}
\mathbb{E}\left[\prod_{j=1}^\ell \exp\left(\lambda V_{i,j}(Z^{1:i-1},x_i,Y_i^{1:j})\right)\right]\leq \exp\left(\sum_{j=1}^\ell \xi_{\mathcal{Y}_{i,j}}(x_i,|\lambda|)\right)\,.
\end{align}
Hence  for $i=1,..,n$ we can compute
\begin{align}\label{eq:Vi_MGF_bound}
&\mathbb{E}\left[\exp\left(\lambda V_i(z^{1:i-1},Z_i)\right)\right]\\
=&\mathbb{E}\left[\exp\left(\lambda  \sum_{j=0}^{m_i}V_{i,j}(z^{1:i-1},X_i,Y_i^{1:j})\right)\right]\notag\\
=&\int \exp\left(\lambda  V_{i,0}(z^{1:i-1},x_i )\right)\mathbb{E}\left[\prod_{j=1}^{m_i}\exp\left(\lambda  V_{i,j}(z^{1:i-1},x_i,Y_i^{1:j})\right)\right]P_{X_i}(dx_i)\notag\\
\leq&\int \exp\left(\lambda  V_{i,0}(z^{1:i-1},x_i )\right)\exp\left(\sum_{j=1}^{m_i} \xi_{\mathcal{Y}_{i,j}}(x_i,|\lambda|)\right)P_{X_i}(dx_i)\notag\\
\leq& \left(\int \exp\left(\lambda p_i  V_{i,0}(z^{1:i-1},x_i )\right) P_{X_i}(dx_i)\right)^{1/p_i}
\left\|\exp\left(\sum_{j=1}^{m_i} \xi_{\mathcal{Y}_{i,j}}(\cdot,|\lambda|)\right)\right\|_{L^{q_i}(P_{X_i})}\,, \notag
\end{align}
where we used H{\"o}lder's inequality to obtain the last line.

By a very similar computation, for all $i=1,...,n$, $\kappa\in\mathbb{R}$,  we obtain
\begin{align}\label{eq:Vi0_MGF_bound}
&\int\exp\left(\kappa V_{i,0}(z^{1:i-1},x_i)\right)P_{X_i}(dx_i)
\leq \exp(\xi_{X_i}(|\kappa|))\,.
\end{align}
 
 Combining \eqref{eq:Vi_MGF_bound} with \eqref{eq:Vi0_MGF_bound}  we find
\begin{align}\label{eq:Vi_MGF_bound_final}
\mathbb{E}\left[\exp\left(\lambda V_i(z^{1:i-1},Z_i)\right)\right]
\leq& \exp(p_i^{-1}\xi_{X_i}(|\lambda p_i |))
\left\|\exp\left(\sum_{j=1}^{m_i} \xi_{\mathcal{Y}_{i,j}}(\cdot,|\lambda|)\right)\right\|_{L^{q_i}(P_{X_i})}\,.
\end{align}

Finally, using \eqref{eq:Vi_MGF_bound_final} along with induction we obtain
\begin{align}
\mathbb{E}\left[\exp\left(\lambda(\phi(Z)-\mathbb{E}[\phi(Z)])\right)\right]
=&\mathbb{E}\left[\prod_{i=1}^n\exp\left(\lambda V_i(Z^{1:i})\right)\right]\\
\leq&\prod_{i=1}^n \exp\left(p_i^{-1}\xi_{X_i}(|\lambda p_i|)\right)\left\|\exp\left(\sum_{j=1}^{m_i}\xi_{\mathcal{Y}_{i,j}}(\cdot,|\lambda|)\right)\right\|_{L^{q_i}(P_{X_i})}\,.\notag
\end{align}
This proves the claimed MGF bound. 
The claimed concentration inequality then follows via a standard Chernoff bound.
\end{proof}
The key to  using Theorem \ref{thm:gen_McDiarmid_non_unif_MGF} in practice is obtaining  MGF bounds of the form \eqref{eq:gen_McDiarmid_non_unif_Y_MGF_assump}  that will translate to a non-trivial upper bound in \eqref{eq:gen_McDiarmid_non_unif_conc_ineq}. More precisely, the difficulty lies in the need to obtain $\xi_{\mathcal{Y}_{i,j}}(x_i,|\lambda|)$ that are finite in a non-trivial  $\lambda$-interval for a.e. $x_i$; in our target applications, where $x_i$ is unbounded, this is generally a non-trivial task.  In the following subsection we will develop a technique for accomplishing this by splitting the bounding functions $h_{\mathcal{Y}_{i,j}}$ into multiple terms which are controlled by different Orlicz norms and appropriately combining those bounds.

\subsection{Cosh-form MGF Bounds via Orlicz Norms}\label{sec:Orlicz}

Given a  (non-decreasing) Young function $\Psi$, we will denote the corresponding
Orlicz (Luxemburg) norm of a random variable $X$ by $\|X\|_\Psi$ (or $\|X\|_{\Psi,{P}}$ if we want to emphasize the role of the underlying probability measure, $P$). For the definition of and background on   Orlicz norms see, e.g., Chapter 2.2 in \cite{van2013weak}.   We will make use of the following bound on the $\cosh$-form MGF of a sum of random variables, where each term in the sum is able to use a different Orlicz norm (this generalizes Lemma 2.3 in \cite{birrell2025concentration}, which covered the $k=1$ case). This ability to use a different Orlicz norm for each term will be key for obtaining non-trivial  bounds of the form \eqref{eq:gen_McDiarmid_non_unif_Y_MGF_assump} in our applications.
\begin{lemma}\label{lemma:Orlicz_cosh_MGF_bound_sum}
Let $\Psi_i$, $i=1,...,k$ be  non-decreasing Young functions, $(\Omega,\mathcal{F},P)$ be a probability space and $X_i$, $i=1,...,k$ be  real-valued random variables on $\Omega$ with $\|X_i\|_{\Psi_i}<\infty$ for all $i$. Then  for all   $s_i\in(0,1)$, $i=1,...,k$, satisfying $\sum_i s_i=1$ and all $\lambda\in\mathbb{R}$ we have
\begin{align}
E_P\left[\cosh\left(\lambda \sum_{i=1}^k X_i\right)\right]\leq&1+\sum_{i=1}^k\sum_{n=1}^\infty  \frac{\lambda^{2n}}{(2n)!} \int_0^\infty\min\left\{\frac{1}{\Psi_i(s_i t^{1/(2n)}/\|X_i\|_{\Psi_i})},1\right\}dt \,.
\end{align}
\end{lemma}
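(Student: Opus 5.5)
We expand the hyperbolic cosine in its power series and bound each even moment of the sum through a layer-cake representation. By monotone convergence (all terms are nonnegative),
\begin{align*}
E_P\left[\cosh\left(\lambda \sum_{i=1}^k X_i\right)\right]=1+\sum_{n=1}^\infty\frac{\lambda^{2n}}{(2n)!}E_P\left[\left(\sum_{i=1}^k X_i\right)^{2n}\right]\,.
\end{align*}
So it suffices to show, for each $n\geq1$,
\begin{align*}
E_P\left[\left(\sum_i X_i\right)^{2n}\right]\leq\sum_{i=1}^k\int_0^\infty\min\left\{\frac{1}{\Psi_i(s_i t^{1/(2n)}/\|X_i\|_{\Psi_i})},1\right\}dt\,.
\end{align*}

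We write the moment as a tail integral,
\begin{align*}
E_P\left[\left(\sum_i X_i\right)^{2n}\right]=\int_0^\infty P\left(\left|\sum_i X_i\right|\geq t^{1/(2n)}\right)dt\,.
\end{align*}
The key step is a union bound using the weights $s_i$. Since $\sum_i s_i=1$, the event $|\sum_i X_i|\geq u$ forces $|X_i|\geq s_i u$ for at least one $i$; otherwise $|\sum_i X_i|<\sum_i s_i u=u$. Hence
\begin{align*}
P\left(\left|\sum_i X_i\right|\geq u\right)\leq\sum_i P(|X_i|\geq s_i u)\,.
\end{align*}

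Each single-variable tail is then controlled by the Orlicz norm via Markov's inequality. Set $c_i=\|X_i\|_{\Psi_i}$. If $0<c_i<\infty$, the Luxemburg definition gives $E[\Psi_i(|X_i|/c_i)]\leq1$ (lower semicontinuity and monotone convergence pass to the infimum). Since $\Psi_i$ is non-decreasing,
\begin{align*}
P(|X_i|\geq s_iu)\leq P\left(\Psi_i(|X_i|/c_i)\geq\Psi_i(s_iu/c_i)\right)\leq\frac{1}{\Psi_i(s_iu/c_i)}\,.
\end{align*}
Combining this with the trivial bound $1$ gives the $\min\{\cdot,1\}$. Substituting $u=t^{1/(2n)}$, summing over $i$, and using Tonelli to swap the sum and the integral yields the claimed moment bound. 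The result then follows by inserting it into the series.

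The main obstacle is bookkeeping in edge cases rather than any deep step. When $c_i=0$ we have $X_i=0$ a.s., and the integrand reads $1/\Psi_i(\infty)=0$ under the natural conventions, which is consistent. When $\Psi_i(s_iu/c_i)=0$ for small $u$, the min with $1$ handles it. Once the weighted union bound is in place, the rest is routine.
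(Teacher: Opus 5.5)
Your proposal is correct and follows essentially the same route as the paper: the $\cosh$ power series, the layer-cake representation of the even moments, the weighted union bound $P(|\sum_i X_i|\geq u)\leq\sum_i P(|X_i|\geq s_i u)$, and the Orlicz-norm tail bound. The only difference is that you derive the tail bound yourself, via Markov's inequality applied to $\Psi_i(|X_i|/\|X_i\|_{\Psi_i})$, whereas the paper cites it as a standard fact.
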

\begin{proof}
First recall that  the  concentration inequality 
\begin{align}\label{eq:general_Orlicz_conc_ineq}
{P}(|X_i|> s)\leq \min\left\{\frac{1}{\Psi_i(s/\|X_i\|_{\Psi_i})},1\right\}
\end{align}
for all $s>0$; see, e.g., page 96 in  \cite{van2013weak}. Applying this to $s=s_i t$ and using a union bound we find
\begin{align}
P\left(\left|\sum_{i=1}^k X_i\right|\geq t\right)\leq &\sum_{i=1}^k P(|X_i|\geq t s_i)\\
\leq& \sum_{i=1}^k\min\left\{\frac{1}{\Psi_i(s_i t/\|X_i\|_{\Psi_i})},1\right\}\notag\,.
\end{align}
Therefore for $p>0$ we obtain
\begin{align}
E_P\left[\left|\sum_{i=1}^kX_i\right|^p\right]=&\int_0^\infty P\left(\left|\sum_{i=1}^kX_i\right|\geq t^{1/p}\right)dt \\
\leq& \sum_{i=1}^k \int_0^\infty\min\left\{\frac{1}{\Psi_i(s_i t^{1/p}/\|X_i\|_{\Psi_i})},1\right\}dt \,.\notag
\end{align}
Finally, using the Taylor series for the hyperbolic cosine we can compute
\begin{align}
E_P\left[\cosh\left(\lambda \sum_{i=1}^k X_i\right)\right]=&\sum_{n=0}^\infty \frac{\lambda^{2n}}{(2n)!}E_P\left[\left( \sum_{i=1}^k X_i\right)^{2n}\right]\\
\leq&1+\sum_{i=1}^k\sum_{n=1}^\infty  \frac{\lambda^{2n}}{(2n)!} \int_0^\infty\min\left\{\frac{1}{\Psi_i(s_i t^{1/(2n)}/\|X_i\|_{\Psi_i})},1\right\}dt \,.\notag
\end{align}
\end{proof}

 Our focus  will be on the increasing Young functions 
 \begin{align}\label{eq:Phi_q_def}
 \Phi_q(t)\coloneqq \exp(t^q)-1\,,\,\,\,\, q\in[1,\infty)\,.
 \end{align}
 For these, we will also have need for the following result.
\begin{lemma}\label{lemma:Orlicz_exp_q_moment_bounds}
Let $(\Omega,\mathcal{F},P)$ be a probability space,   $q\in[1,\infty)$, and $X$ be a real-valued random variable on $\Omega$ with $\|X\|_{\Phi_q}<\infty$.

For $\alpha\geq 0$ satisfying $\alpha \|X\|_{\Phi_q}^q<1$ we have
\begin{align}
E[e^{\alpha |X|^q}]\leq & 1+2\frac{\alpha\|X\|_{\Phi_q}^q}{1-\alpha\|X\|_{\Phi_q}^q}\,.
\end{align} 
\end{lemma}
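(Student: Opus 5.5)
The plan is to turn the Orlicz-norm bound into an exponential tail bound for $|X|^q$, and then integrate that tail against the layer-cake representation of $E[e^{\alpha|X|^q}]$. Write $c\coloneqq\|X\|_{\Phi_q}$ and $a\coloneqq\alpha c^q<1$.

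\textbf{Degenerate cases.} If $c=0$ then $X=0$ a.s. and both sides equal $1$. If $\alpha=0$ the claim is also trivial. So assume $c>0$ and $\alpha>0$.

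\textbf{Step 1: the defining inequality holds at $c$ itself.} By definition of the Luxemburg norm, $E[\Phi_q(|X|/c')]\leq 1$ for every $c'>c$. Letting $c'\downarrow c$, the integrands $\Phi_q(|X|/c')$ increase to $\Phi_q(|X|/c)$, so monotone convergence gives $E[\Phi_q(|X|/c)]\leq 1$. Equivalently,
\begin{align}
E\left[e^{|X|^q/c^q}\right]\leq 2\,.
\end{align}

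\textbf{Step 2: tail bound.} Markov's inequality then gives, for all $u\geq 0$,
\begin{align}
P(|X|^q>u)\leq 2e^{-u/c^q}\,.
\end{align}

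\textbf{Step 3: layer cake.} Since $e^{\alpha|X|^q}-1=\int_0^{|X|^q}\alpha e^{\alpha u}\,du$, Tonelli's theorem yields
\begin{align}
E[e^{\alpha|X|^q}]=1+\int_0^\infty \alpha e^{\alpha u}P(|X|^q>u)\,du\,.
\end{align}
Inserting the tail bound from Step 2 gives
\begin{align}
E[e^{\alpha|X|^q}]\leq 1+2\alpha\int_0^\infty e^{-(c^{-q}-\alpha)u}\,du\,.
\end{align}
The integral converges exactly because $\alpha<c^{-q}$, i.e. $a<1$, and equals $(c^{-q}-\alpha)^{-1}$. Hence
\begin{align}
E[e^{\alpha|X|^q}]\leq 1+\frac{2\alpha}{c^{-q}-\alpha}=1+\frac{2a}{1-a}\,,
\end{align}
which is the claimed bound.

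\textbf{Main obstacle.} The only delicate point is Step 1, passing from the infimum in the norm's definition to the inequality at $c$ itself; the monotone convergence argument handles it. Everything after that is a routine computation.
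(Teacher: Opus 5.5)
Your proof is correct and follows essentially the same route as the paper. Both arguments turn the Orlicz norm into the tail bound $P(|X|>s)\le 2e^{-(s/\|X\|_{\Phi_q})^q}$ and then integrate it through the layer-cake formula. The paper gets the tail bound by quoting $\min\{1/\Phi_q(s/\|X\|_{\Phi_q}),1\}$ and applying $\min\{a^{-1},1\}\le 2/(1+a)$, whereas you derive it from $E[\Phi_q(|X|/c)]\le 1$ and Markov; your integral in $u$ is exactly the paper's integral in $t$ under the substitution $t=e^{\alpha u}-1$.
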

\begin{proof}
 Letting  $h(x)=e^{\alpha x^q}-1$ and again using \eqref{eq:general_Orlicz_conc_ineq} we can compute 
 \begin{align}
 E[h(|X|)]=&\int_0^\infty P\left(h(|X|)\geq t\right)dt \\
 =&\int_0^\infty P\left(|X|\geq h^{-1}(t)\right)dt\notag\\
\leq &\int_0^\infty \min\left\{\frac{1}{\Phi_q((\alpha^{-1}\log(t+1))^{1/q}/\|X\|_{\Phi_q})},1\right\}dt\,.\notag
 \end{align}
Simplifying this, under the assumption that $\alpha \|X\|_{\Phi_q}^q<1$, we obtain
\begin{align}
E[e^{\alpha |X|^q}]\leq & 1+\int_0^\infty \min\left\{\frac{1}{\Phi_q((\alpha^{-1}\log(t+1))^{1/q}/\|X\|_{\Phi_q})},1\right\}dt\\
\leq& 1+2\int_0^\infty \exp(-\alpha^{-1}\log(t+1)/\|X\|^q_{\Phi_q})dt\notag\\
=& 1+2\frac{\alpha\|X\|_{\Phi_q}^q}{1-\alpha\|X\|_{\Phi_q}^q}\notag\,.
\end{align}
\end{proof}

 \subsection{Concentration Inequalities in Terms of Orlicz-Norm Bounds}
 Now we will combine Theorem \ref{thm:gen_McDiarmid_non_unif_MGF} with Lemmas \ref{lemma:Orlicz_cosh_MGF_bound_sum} and \ref{lemma:Orlicz_exp_q_moment_bounds} to obtain a   concentration inequality in terms of Orlicz norm bounds on the $\mathcal{Y}$-component  mean-difference bounding functions.  Our focus is on cases where those bounding functions can be decomposed into appropriate sub-exponential and sub-Gaussian terms.   Recall that  $\|X\|_{\Phi_1}<\infty$ is equivalent to $X$ being sub-exponential and $\|X\|_{\Phi_2}<\infty$ is equivalent to  $X$ being sub-Gaussian, where $\Phi_q$ was defined in \eqref{eq:Phi_q_def}.
\begin{theorem}\label{thm:ULLN_bound_non_unif_MGF_Phi1_Phi2_case}
Under Assumption \ref{asump:gen_McDiarmid_non_unif_MGF}, suppose we also have the following:
\begin{enumerate}
\item MGF bounds on $\mathcal{X}$: For $i=1,...,n$, we have $\xi_{\mathcal{X}_i}:[0,\infty)\to[0,\infty]$ such that 
\begin{align}\label{eq:h_Xi_cosh_MGF_bound}
\int \cosh(\lambda  h_{\mathcal{X}_i}) dP_{{X}_i}^2\leq \exp(\xi_{\mathcal{X}_i}(|\lambda|)) \,\,\,\text{  for all $\lambda\in\mathbb{R}$.}
\end{align}

\item  For  $i=1,...,n$, $j=1,...,m_i$, we have 
 measurable ${h}_{0,Y_{i,j}}:\mathcal{Y}_{i,j}\times \mathcal{Y}_{i,j}\to[0,\infty)$,  $r_{\mathcal{X}_i}:\mathcal{X}_i\to[0,\infty)$, $k_{\mathcal{Y}_{i,j}}:\mathcal{Y}_{i,j}\times\mathcal{Y}_{i,j}\to[0,\infty)$
 such that
\begin{align}
h_{\mathcal{Y}_{i,j}}(x_i,y_{i,j},\tilde{y}_{i,j})= {h}_{0,\mathcal{Y}_{i,j}}( y_{i,j},\tilde{y}_{i,j})+r_{\mathcal{X}_i}(x_i)k_{\mathcal{Y}_{i,j}}(y_{i,j},\tilde{y}_{i,j})
\end{align}
and
\begin{align}
\|{h}_{0,\mathcal{Y}_{i,j}}\|_{\Phi_{1},P^2_{Y_{i,j}}}<\infty\,,\,\,\,\,\,\,
\|k_{\mathcal{Y}_{i,j}}\|_{\Phi_2,P^2_{Y_{i,j}}}<\infty\,, \,\,\,\,\|r_{\mathcal{X}_i}\|_{\Phi_2,P_{X_i}}<\infty\,,
\end{align}
  where the $\Phi_q$ were defined in \eqref{eq:Phi_q_def}.
\end{enumerate}
For  $i=1,...,n$, let    $p_i\in(1,\infty)$ and $q_i$ be the conjugate exponent to $p_i$, and for  $i=1,...,n$, $j=1,...,m_i$,  let $s_{i,j}\in(0,1)$.  Then   for all $\epsilon> 0$ we have the concentration inequality
        \begin{align} \label{eq:ULLN_bound_non_unif_MGF_Phi1_Phi2_case}
   &\mathbb{P}\left(\pm(\phi(Z)-\mathbb{E}[\phi(Z)])\geq \epsilon\right)\\
   \leq&  \exp\left(-\sup_{\lambda\in[ 0,\lambda_{\max})}\left\{\lambda \epsilon-\sum_{i=1}^np_i^{-1}\xi_{X_i}(\lambda p_i)-\sum_{i=1}^n\frac{ 2 \lambda^2\sum_{j=1}^{m_i}\alpha_{i,j}\|r_{\mathcal{X}_i}\|_{\Phi_2,P_{X_i}}^2}{1-q_i\lambda^2\sum_{j=1}^{m_i}\alpha_{i,j}\|r_{\mathcal{X}_i}\|_{\Phi_2,P_{X_i}}^2}-\sum_{i=1}^n
   \sum_{j=1}^{m_i}\frac{2\left(\lambda\beta_{i,j}\right)^{2}}{1-\left(\lambda\beta_{i,j}\right)^{2}}\right\}\right)\notag\,,\\  
&      \lambda_{\max}\coloneqq \min\left\{\min_{i,j} 1/\beta_{i,j},\min_i\left(q_i\|r_{\mathcal{X}_i}\|_{\Phi_2,P_{X_i}}^2\sum_{j=1}^{m_i}\alpha_{i,j}\right)^{-1/2}\right\}\,,\label{eq:lambda_max_def}\\
   &\alpha_{i,j}\coloneqq  {\|k_{\mathcal{Y}_{i,j}}\|^2_{\Phi_2,P^2_{Y_{i,j}}}}/{s_{i,j}^2}\,,\,\,\,\,\,\,
           \beta_{i,j}\coloneqq \|{h}_{0,\mathcal{Y}_{i,j}}\|_{\Phi_{1},P^2_{Y_{i,j}}}/({1-s_{i,j}})\,.\label{eq_alpha_beta_def}
        \end{align}   
\end{theorem}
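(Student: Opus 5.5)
The plan is to apply Theorem \ref{thm:gen_McDiarmid_non_unif_MGF} directly. This requires a non-uniform $\mathcal{Y}$-MGF bound $\xi_{\mathcal{Y}_{i,j}}(x_i,\lambda)$ built from Lemma \ref{lemma:Orlicz_cosh_MGF_bound_sum} with $k=2$. The $L^{q_i}(P_{X_i})$ norm of the resulting exponential will then be controlled by Lemma \ref{lemma:Orlicz_exp_q_moment_bounds} with $q=2$.

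First, fix $x_i$ and view $h_{\mathcal{Y}_{i,j}}(x_i,\cdot,\cdot)=h_{0,\mathcal{Y}_{i,j}}+r_{\mathcal{X}_i}(x_i)k_{\mathcal{Y}_{i,j}}$ as a sum of two random variables under $P^2_{Y_{i,j}}$. The first term gets $\Psi_1=\Phi_1$ with weight $1-s_{i,j}$. The second term gets $\Psi_2=\Phi_2$ with weight $s_{i,j}$ and norm $r_{\mathcal{X}_i}(x_i)\|k_{\mathcal{Y}_{i,j}}\|_{\Phi_2}$. Using $1/\Phi_q(u)\le 2e^{-u^q}$ whenever it is below $1$, I evaluate the two series in Lemma \ref{lemma:Orlicz_cosh_MGF_bound_sum} separately.

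For the $\Phi_1$ term, $\int_0^\infty 2e^{-(t^{1/(2n)}/\beta_{i,j})}dt=2\beta_{i,j}^{2n}(2n)!$. The series is therefore $2\sum_{n\ge1}(\lambda\beta_{i,j})^{2n}=2(\lambda\beta_{i,j})^2/(1-(\lambda\beta_{i,j})^2)$ for $\lambda\beta_{i,j}<1$.

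For the $\Phi_2$ term, with $c=r_{\mathcal{X}_i}(x_i)^2\alpha_{i,j}$, one gets $\int 2e^{-t^{1/n}/c}dt=2c^n n!$. The series is then $2\sum_n \lambda^{2n}c^n n!/(2n)!\le 2\sum_n (\lambda^2 c)^n/n! = 2(e^{\lambda^2 c}-1)$, using $(n!)^2\le(2n)!$.

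Using $1+a+b\le e^{a}\cdot(1+b)\le \exp(a+b)$, I take
\[\xi_{\mathcal{Y}_{i,j}}(x_i,\lambda)=\frac{2(\lambda\beta_{i,j})^2}{1-(\lambda\beta_{i,j})^2}+2\bigl(e^{\lambda^2\alpha_{i,j}r_{\mathcal{X}_i}(x_i)^2}-1\bigr).\]
This choice is the main obstacle. Exponentiating this bound produces a double exponential in $r^2$, which cannot be integrated against the $\Phi_2$ tails of $r_{\mathcal{X}_i}$. The target expression shows that the sub-Gaussian contribution should instead enter as $\lambda^2\alpha r^2$ times a constant. So I would use a cruder but linear-in-$r^2$ bound: since $e^u-1\le u e^u$, or rather by arranging the bound in the form $1+a+b\le e^a e^{b'}$, I aim for
\[\int\cosh(\lambda h_{\mathcal{Y}_{i,j}})\,dP^2\le \exp\Bigl(\tfrac{2(\lambda\beta)^2}{1-(\lambda\beta)^2}+\lambda^2\alpha_{i,j} r_{\mathcal{X}_i}(x_i)^2\Bigr).\]
One route is to bound the $\Phi_2$ series by $e^{\lambda^2 c}-1$ rather than $2(e^{\lambda^2c}-1)$, by sharpening the tail bound, or by using $\cosh$ convexity in the sum directly. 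I expect this bookkeeping with constants to be the delicate point. The factor $2$ and $q_i$ in the final formula suggest the linear form above is what is intended.

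Granting this bound, $\prod_j\exp(\xi_{\mathcal{Y}_{i,j}}(x_i,\lambda))$ equals $\exp\bigl(\sum_j\tfrac{2(\lambda\beta)^2}{1-(\lambda\beta)^2}\bigr)\exp(\lambda^2 A_i r_{\mathcal{X}_i}^2)$ with $A_i=\sum_j\alpha_{i,j}$. Its $L^{q_i}$ norm is $\bigl(E[e^{q_i\lambda^2A_i r^2}]\bigr)^{1/q_i}$. Lemma \ref{lemma:Orlicz_exp_q_moment_bounds} with $\alpha=q_i\lambda^2A_i$ bounds the expectation by $1+2u/(1-u)$, where $u=q_i\lambda^2A_i\|r_{\mathcal{X}_i}\|^2_{\Phi_2}<1$; this is the second constraint in $\lambda_{\max}$. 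Then $\log(1+x)\le x$ and $1/q_i\le1$ give the log-contribution $2\lambda^2A_i\|r\|^2/(1-u)$, as in the statement. Finally, I insert everything into the concentration inequality \eqref{eq:gen_McDiarmid_non_unif_conc_ineq} and restrict the infimum to $\lambda\in[0,\lambda_{\max})$, where all terms are finite.
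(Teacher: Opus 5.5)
Your route is the same as the paper's. You apply Lemma~\ref{lemma:Orlicz_cosh_MGF_bound_sum} with $k=2$, putting weight $s_{i,j}$ on the $\Phi_2$ term $r_{\mathcal{X}_i}(x_i)k_{\mathcal{Y}_{i,j}}$ and weight $1-s_{i,j}$ on the $\Phi_1$ term $h_{0,\mathcal{Y}_{i,j}}$. You then use the tail estimate $\min\{1/\Phi_q(u),1\}\le 2e^{-u^q}$, Lemma~\ref{lemma:Orlicz_exp_q_moment_bounds} with $q=2$ for the $L^{q_i}$ norm, and Theorem~\ref{thm:gen_McDiarmid_non_unif_MGF}. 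Your $\Phi_1$ series and your final $L^{q_i}$ step are correct; the $1/q_i$ cancels the $q_i$ in $u$ exactly, so $1/q_i\le 1$ is not needed.

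However, the step the theorem hinges on is the bound
\[\int\cosh(\lambda h_{\mathcal{Y}_{i,j}}(x_i,\cdot,\cdot))\,dP^2_{Y_{i,j}}\le \exp\Bigl(\lambda^2\alpha_{i,j}r_{\mathcal{X}_i}(x_i)^2+\tfrac{2(\lambda\beta_{i,j})^2}{1-(\lambda\beta_{i,j})^2}\Bigr),\]
which is linear in $r_{\mathcal{X}_i}(x_i)^2$. You only ``grant'' this bound; you do not prove it. The routes you suggest do not deliver it either. With $e^u-1\le ue^u$ you still get a double exponential in $r^2$ after exponentiating. The loss is also not in the tail bound, which you can keep as $2e^{-u^q}$.

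The missing idea is a sharper factorial estimate. For $n\ge1$, $(2n)!/(n!)^2=\binom{2n}{n}\ge 2$, so $(2n)!\ge 2(n!)^2$, not merely $(2n)!\ge (n!)^2$. Set $c=\alpha_{i,j}r_{\mathcal{X}_i}(x_i)^2$. Then $2\lambda^{2n}c^n n!/(2n)!\le(\lambda^2c)^n/n!$. Keeping the leading $1$ together with the $\Phi_2$ series gives $1+\sum_{n\ge1}(\cdots)\le e^{\lambda^2c}$. With $b$ the $\Phi_1$ contribution, $e^{\lambda^2c}+b\le e^{\lambda^2c}(1+b)\le e^{\lambda^2c+b}$. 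This is exactly the displayed bound, and the rest of your argument then goes through.

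The double exponential you ran into comes from applying the lossy step $1+a\le e^a$ to $a=2(e^{\lambda^2c}-1)$. Even with your weaker factorial bound you could have stayed linear in $r^2$ via $1+2(e^x-1)=2e^x-1\le e^{2x}$. That would only prove the theorem with $\alpha_{i,j}$ replaced by $2\alpha_{i,j}$, though, not with the stated constants.
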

\begin{remark}
For now we leave the MGF bound \eqref{eq:h_Xi_cosh_MGF_bound} in the general form as it is much less delicate than the $Y$-MGF terms;   bounds of the form \eqref{eq:h_Xi_cosh_MGF_bound} can be obtained  standard techniques, see \cite{boucheron2013concentration,vershynin2018high,wainwright2019high}, and also via the $k=1$ case of Lemma \ref{lemma:Orlicz_cosh_MGF_bound_sum} (as found in  \cite{birrell2025concentration}) for any choice of non-decreasing Young function.
\end{remark}
\begin{proof}
Define $\Delta h_{\mathcal{Y}_{i,j}}(x_i,y_{i,j},\tilde{y}_{i,j})\coloneqq r_{\mathcal{X}_i}(x_i)k_{\mathcal{Y}_{i,j}}(y_{i,j},\tilde{y}_{i,j})$. 
First we verify the condition \eqref{eq:gen_McDiarmid_non_unif_Y_MGF_assump}.  For all $x_i\in\mathcal{X}_i$, $\lambda\in\mathbb{R}$, we can use Lemma \ref{lemma:Orlicz_cosh_MGF_bound_sum} to compute
\begin{align}
&\int \cosh(\lambda  h_{\mathcal{Y}_{i,j}}(x_i,\cdot,\cdot)) dP_{{Y}_{i,j}}^2\\
=&\int \cosh(\lambda ({h}_{0,\mathcal{Y}_{i,j}}+\Delta{h}_{\mathcal{Y}_{i,j}}(x_i,\cdot,\cdot))) dP_{{Y}_{i,j}}^2\notag\\
\leq&1+\sum_{n=1}^\infty  \frac{\lambda^{2n}}{(2n)!} \int_0^\infty\min\left\{\frac{1}{\Phi_2(s_{i,j} t^{1/(2n)}/\|\Delta {h}_{\mathcal{Y}_{i,j}}(x_i,\cdot,\cdot)\|_{\Phi_2,P^2_{Y_{i,j}}})},1\right\}dt \notag\\
&+\sum_{n=1}^\infty  \frac{\lambda^{2n}}{(2n)!} \int_0^\infty\min\left\{\frac{1}{\Phi_{1}((1-s_{i,j}) t^{1/(2n)}/\|{h}_{0,\mathcal{Y}_{i,j}}\|_{\Phi_1,P^2_{Y_{i,j}}})},1\right\}dt \,.\notag
\end{align}
Using the bounds $\min\{a^{-1},1\}\leq 2/(a+1)$ for all $a\in[0,\infty]$ and $(2n)!\geq 2(n!)^2$ for all $n\geq 1$, we bound the first series as follows:
 \begin{align}
 &1+\sum_{n=1}^\infty  \frac{\lambda^{2n}}{(2n)!} \int_0^\infty\min\left\{\frac{1}{\Phi_2(s_{i,j} t^{1/(2n)}/\|\Delta {h}_{\mathcal{Y}_{i,j}}(x_i,\cdot,\cdot)\|_{\Phi_2,P^2_{Y_{i,j}}})},1\right\}dt \\
 \leq&1+\sum_{n=1}^\infty  \frac{\lambda^{2n}}{(2n)!} \int_0^\infty \frac{2}{1+\Phi_2(s_{i,j} t^{1/(2n)}/\|\Delta {h}_{\mathcal{Y}_{i,j}}(x_i,\cdot,\cdot)\|_{\Phi_2,P^2_{Y_{i,j}}})} dt\notag \\
  =&1+\sum_{n=1}^\infty  \frac{2\lambda^{2n}}{(2n)!} \int_0^\infty  \exp\left(-s_{i,j}^2 t^{1/n}/\|\Delta {h}_{\mathcal{Y}_{i,j}}(x_i,\cdot,\cdot)\|^2_{\Phi_2,P^2_{Y_{i,j}}}\right)  dt \notag\\
  =&1+\sum_{n=1}^\infty  \frac{2\lambda^{2n}}{(2n)!}n!\left(\|\Delta {h}_{\mathcal{Y}_{i,j}}(x_i,\cdot,\cdot)\|^2_{\Phi_2,P^2_{Y_{i,j}}}/s_{i,j}^2\right)^n\notag\\
  \leq&1+\sum_{n=1}^\infty  \frac{1}{n!} \left(\lambda^{2}\|\Delta {h}_{\mathcal{Y}_{i,j}}(x_i,\cdot,\cdot)\|^2_{\Phi_2,P^2_{Y_{i,j}}}/s_{i,j}^2\right)^n=\exp\left(\lambda^{2}\|\Delta {h}_{\mathcal{Y}_{i,j}}(x_i,\cdot,\cdot)\|^2_{\Phi_2,P^2_{Y_{i,j}}}/s_{i,j}^2\right)\,.\notag
 \end{align}
Similarly,  for $\lambda\in[0 ,({1-s_{i,j}})/\|{h}_{0,\mathcal{Y}_{i,j}}\|_{\Phi_{1},P^2_{Y_{i,j}}})$ we can bound the remaining series as follows:
\begin{align}
&\sum_{n=1}^\infty  \frac{\lambda^{2n}}{(2n)!} \int_0^\infty\min\left\{\frac{1}{\Phi_{1}((1-s_{i,j}) t^{1/(2n)}/\|{h}_{0,\mathcal{Y}_{i,j}}\|_{\Phi_{1},P^2_{Y_{i,j}}})},1\right\}dt\\
\leq&\sum_{n=1}^\infty  \frac{2\lambda^{2n}}{(2n)!} \int_0^\infty\exp(-(1-s_{i,j}) t^{1/(2n)}/\|{h}_{0,\mathcal{Y}_{i,j}}\|_{\Phi_{1},P^2_{Y_{i,j}}})dt\notag\\
=&2\sum_{n=1}^\infty   \left(\frac{\lambda\|{h}_{0,\mathcal{Y}_{i,j}}\|_{\Phi_{1},P^2_{Y_{i,j}}}}{1-s_{i,j}}\right)^{2n}\notag \\
=&\frac{2\left({\lambda\|{h}_{0,\mathcal{Y}_{i,j}}\|_{\Phi_{1},P^2_{Y_{i,j}}}}/({1-s_{i,j}})\right)^{2}}{1-\left({\lambda\|{h}_{0,\mathcal{Y}_{i,j}}\|_{\Phi_{1},P^2_{Y_{i,j}}}}/({1-s_{i,j}})\right)^{2}}\,.\notag
\end{align}
  Combining these we find
\begin{align}
&\int \cosh(\lambda  h_{\mathcal{Y}_{i,j}}(x_i,\cdot,\cdot)) dP_{{Y}_{i,j}}^2\\
\leq&\exp\left(\lambda^{2}\|\Delta {h}_{\mathcal{Y}_{i,j}}(x_i,\cdot,\cdot)\|^2_{\Phi_2,P^2_{Y_{i,j}}}/s_{i,j}^2\right)+\frac{2\left({\lambda\|{h}_{0,\mathcal{Y}_{i,j}}\|_{\Phi_{1},P^2_{Y_{i,j}}}}/({1-s_{i,j}})\right)^{2}}{1-\left({\lambda\|{h}_{0,\mathcal{Y}_{i,j}}\|_{\Phi_{1},P^2_{Y_{i,j}}}}/({1-s_{i,j}})\right)^{2}}\notag\\
\leq&\exp\left(\lambda^{2}(r_{\mathcal{X}_i}(x_i))^2\|k_{\mathcal{Y}_{i,j}}\|^2_{\Phi_2,P^2_{Y_{i,j}}}/s_{i,j}^2+\frac{2\left({\lambda\|{h}_{0,\mathcal{Y}_{i,j}}\|_{\Phi_{1},P^2_{Y_{i,j}}}}/({1-s_{i,j}})\right)^{2}}{1-\left({\lambda\|{h}_{0,\mathcal{Y}_{i,j}}\|_{\Phi_{1},P^2_{Y_{i,j}}}}/({1-s_{i,j}})\right)^{2}}\right)\notag
\end{align}
for all    $\lambda\in[0 ,({1-s_{i,j}})/\|{h}_{0,\mathcal{Y}_{i,j}}\|_{\Phi_{1},P^2_{Y_{i,j}}})$.

We have confirmed that all of the assumptions of Theorem \ref{thm:gen_McDiarmid_non_unif_MGF} hold, hence we can conclude
  \begin{align} \label{eq:conv_ineq_sub_Gauss_temp}
  &\mathbb{P}\left(\pm(\phi(Z)-\mathbb{E}[\phi(Z)])\geq \epsilon\right)\\
  \leq& \inf_{\lambda\geq 0}\left\{  \exp\left(-\lambda \epsilon+\sum_{i=1}^np_i^{-1}\xi_{X_i}(\lambda p_i)\right)\prod_{i=1}^n\left\|\exp\left(\sum_{j=1}^{m_i}\xi_{\mathcal{Y}_{i,j}}(\cdot,\lambda)\right)\right\|_{L^{q_i}(P_{X_i})}\right\}\,,\notag
  \end{align}
  where  $\alpha_{i,j}$, $\beta_{i,j}$ were defined in \eqref{eq_alpha_beta_def} and
 \begin{align}
  \xi_{\mathcal{Y}_{i,j}}(x_i,\lambda)\coloneqq &\begin{cases}
       \alpha_{i,j}(r_{\mathcal{X}_i}(x_i))^2 \lambda^{2}
        +\frac{2\left(\lambda\beta_{i,j}\right)^{2}}{1-\left(\lambda \beta_{i,j}\right)^{2}} &\text{ if }\lambda\in[0,1/\beta_{i,j})\\
        \infty &\text{ otherwise}
        \end{cases}\,.
   \end{align}
 
  With $\lambda_{\max}$ as defined in \eqref{eq:lambda_max_def},  for all $\lambda\in[0,\lambda_{\max})$   we can compute
\begin{align}\label{eq:xi_Y_MGF_bound_sub_Gauss}
\left\|\exp\left(\sum_{j=1}^{m_i}\xi_{\mathcal{Y}_{i,j}}(\cdot,\lambda)\right)\right\|_{L^{q_i}(P_{X_i})} 
\leq& \mathbb{E}\left[ \exp\left(q_i\lambda^{2}(r_{\mathcal{X}_i}(X_i))^2\sum_{j=1}^{m_i}\alpha_{i,j}\right) \right]^{1/q_i} \exp\left(
 \sum_{j=1}^{m_i}\frac{2\left(\lambda\beta_{i,j}\right)^{2}}{1-\left(\lambda\beta_{i,j}\right)^{2}}\right)\\
\leq& \left(1+2\frac{q_i\lambda^2\sum_{j=1}^{m_i}\alpha_{i,j}\|r_{\mathcal{X}_i}\|_{\Phi_2,P_{X_i}}^2}{1-q_i\lambda^2\sum_{j=1}^{m_i}\alpha_{i,j}\|r_{\mathcal{X}_i}\|_{\Phi_2,P_{X_i}}^2}\right)^{1/q_i}
 \exp\left(
 \sum_{j=1}^{m_i}\frac{2\left(\lambda\beta_{i,j}\right)^{2}}{1-\left(\lambda\beta_{i,j}\right)^{2}}\right)\notag\\
\leq& 
 \exp\left(\frac{ 2 \lambda^2\sum_{j=1}^{m_i}\alpha_{i,j}\|r_{\mathcal{X}_i}\|_{\Phi_2,P_{X_i}}^2}{1-q_i\lambda^2\sum_{j=1}^{m_i}\alpha_{i,j}\|r_{\mathcal{X}_i}\|_{\Phi_2,P_{X_i}}^2}+
 \sum_{j=1}^{m_i}\frac{2\left(\lambda\beta_{i,j}\right)^{2}}{1-\left(\lambda\beta_{i,j}\right)^{2}}\right)\,,\notag
\end{align}
   where we used   Lemma \ref{lemma:Orlicz_exp_q_moment_bounds} to bound the MGF of $r_{\mathcal{X}_i}^2$.
   Combining \eqref{eq:conv_ineq_sub_Gauss_temp}-\eqref{eq:xi_Y_MGF_bound_sub_Gauss} and simplifying, we arrive at the claimed concentration inequality.      
\end{proof}

\section{ULLN and Stochastic Optimization}\label{sec:ULLN_stoch_opt}
Theorems \ref{thm:gen_McDiarmid_non_unif_MGF} and \ref{thm:ULLN_bound_non_unif_MGF_Phi1_Phi2_case} are the key new tools developed in this paper.  In this section we will use them to derive a corresponding ULLN result as well as concentration inequalities for stochastic optimization problems; see Theorems \ref{thm:ULLN_non_uniform_mean_diff_bounds} and \ref{thm:stoch_opt} below. The proofs of Theorems \ref{thm:ULLN_non_uniform_mean_diff_bounds} and \ref{thm:stoch_opt} use similar arguments to those made in \cite{birrell2025concentration}, but we emphasize that, because of our aforementioned new tools, the assumptions required below are substantially weaker than those made in the corresponding results in \cite{birrell2025concentration}. This enables new applications, as we will demonstrate in Section \ref{sec:DSM}.
\subsection{A ULLN Concentration Inequality with Non-Uniform MGF Bounds}
In this section we use Theorem \ref{thm:ULLN_bound_non_unif_MGF_Phi1_Phi2_case} to derive a novel ULLN result that allows for sample reuse and non-uniform MGF bounds. We will work under the following assumptions.
\begin{assumption}\label{assump:ULLN}
Let $(\mathcal{X},P_{ {X}})$, $(\mathcal{Y},P_{ {Y}})$ be probability spaces and suppose   the following:
\begin{enumerate}
\item We have a nonempty collection $\mathcal{G}$ of measurable function $g:\mathcal{X}\times\mathcal{Y}\to\mathbb{R}$.
\item We have measurable $h_{\mathcal{X}}:\mathcal{X}\times\mathcal{X}\times\mathcal{Y}\to[0,\infty)$ such that 
\begin{align}
|g(x,y)-g(\tilde{x},y)|\leq h_{\mathcal{X}}(x,\tilde{x},y) \,\,\,\,\text{ for all $x,\tilde{x}\in\mathcal{X}$, $y\in\mathcal{Y}$, $g\in\mathcal{G}$,}
\end{align}
 and $h_{\mathcal{X}}\in L^1(P_X\times P_X\times P_Y)$.
\item We have measurable $h_{\mathcal{Y}}:\mathcal{X}\times\mathcal{Y}\times\mathcal{Y}\to[0,\infty)$ such that 
\begin{align}
|g(x,y)-g(x,\tilde{y})|\leq h_{\mathcal{Y}}(x,y,\tilde{y}) \,\,\,\text{ for all $x\in\mathcal{X}$, $y,\tilde{y}\in\mathcal{Y}$, $g\in\mathcal{G}$}\,.
\end{align}
\item There exists $K_X\in(0,\infty]$ and $\xi_X:[0,K_X)\to [0,\infty)$ such that
\begin{align}\label{eq:xi_X}
\int \cosh(\lambda E_{P_Y}[h_{\mathcal{X}}(x,\tilde{x},\cdot)]) P_X^2(dxd\tilde{x})\leq \exp(\xi_X(|\lambda|))\,\,\,\,\text{ for all }\lambda\in(-K_X,K_X)\,.
\end{align}
\item There exists measurable $h_{0,\mathcal{Y}},k_{\mathcal{Y}}:\mathcal{Y}\times\mathcal{Y}\to[0,\infty)$, $r_{\mathcal{X}}:\mathcal{X}\to[0,\infty)$ such that
\begin{align}
h_{\mathcal{Y}}(x,y,\tilde{y})=h_{0,\mathcal{Y}}(y,\tilde{y})+r_{\mathcal{X}}(x)k_{\mathcal{Y}}(y,\tilde{y})
\end{align}
for all $x\in\mathcal{X}$, $y,\tilde{y}\in\mathcal{Y}$ and
\begin{align}
\|h_{0,\mathcal{Y}}\|_{\Phi_1,P_Y^2}<\infty\,,\,\,\,\|k_{\mathcal{Y}}\|_{\Phi_2,P_Y^2}<\infty\,,\,\,\,\|r_{\mathcal{X}}\|_{\Phi_2,P_X}<\infty\,.
\end{align}
\end{enumerate}
\end{assumption}
Under these assumptions, we prove the following concentration inequality.  
\begin{theorem}\label{thm:ULLN_non_uniform_mean_diff_bounds}
Let $\mathcal{G}$ be a countable collection of measurable functions $g:\mathcal{X}\times\mathcal{Y}\to\mathbb{R}$ for which Assumption \ref{assump:ULLN} holds.  Suppose we have a probability space $(\Omega,\mathbb{P})$, i.i.d. $\mathcal{X}$-valued random variables $X_i$, $i=1,...,n$ distributed as $P_X$, and i.i.d. $\mathcal{Y}$-valued random variables $Y_{i,j}$, $i=1,...,n$, $j=1,...,m$ distributed as $P_Y$; suppose the $X_i$, $Y_{i,j}$ are all mutually independent and define $Z_i=(X_i,Y_{i,1},...,Y_{i,m})$ and $Z=(Z_1,...,Z_n)$.    

Define $\phi_\pm:\prod_{i=1}^n(\mathcal{X}\times\mathcal{Y}^m)\to\mathbb{R}$ by
\begin{align}\label{eq:phi_pm_def}
\phi_\pm(z)\coloneqq\sup_{g\in\mathcal{G}}\left\{\pm\left(\frac{1}{nm}\sum_{i=1}^n\sum_{j=1}^m g(x_i,y_{i,j})-E_{P_X\times P_Y}[g]\right)\right\}
\end{align}
and for $t\in[0,1]$ define $\phi_t\coloneqq t\phi_++(1-t)\phi_-$.

Then   $\phi_t(Z)\in  L^1(\mathbb{P})$ and, letting  $s\in(0,1)$ and $\rho>0$   such that $\rho m>1$, for all $\epsilon>0$  we have
 \begin{align}\label{eq:ULLN_non_uniform_mean_diff_bounds}
      &\mathbb{P}\left(\pm(\phi_t(Z)-\mathbb{E}[\phi_t(Z)])\geq \epsilon\right)\\
      \leq&  \exp\left(-n\!\sup_{\lambda\in[ 0,\lambda_{\max})}\!\left\{\lambda \epsilon- (1-(\rho m)^{-1}) 
       \xi_X\left((1-(\rho m)^{-1})^{-1}\lambda\right)  
      - m^{-1}\left(\frac{ 2(\alpha \lambda)^2}{1-\rho (\alpha\lambda)^2}+\frac{2(\beta\lambda )^{2}}{1- ( m^{-1}\beta\lambda)^{2}}\right)\right\}\right)\,,\notag
    \end{align}
    where
    \begin{align}   
       \lambda_{\max} \coloneqq & \min\left\{ m\beta^{-1}, \left(\rho \alpha^2\right)^{-1/2} ,(1-(\rho m)^{-1}) K_X\right\} \,,\\
    \alpha\coloneqq &s^{-1}\|r_{\mathcal{X}}\|_{\Phi_2,P_{X}} \|k_{\mathcal{Y}}\|_{\Phi_2,P^2_{Y}}\,,\,\,\,\,
    \beta\coloneqq  ({1-s})^{-1}\|h_{0,\mathcal{Y}}\|_{\Phi_{1},P^2_{Y}}\,.
           \end{align}
 
\end{theorem}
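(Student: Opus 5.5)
The plan is to apply Theorem \ref{thm:ULLN_bound_non_unif_MGF_Phi1_Phi2_case} to $\phi=\phi_t$, with $m_i=m$, $\mathcal{X}_i=\mathcal{X}$ and $\mathcal{Y}_{i,j}=\mathcal{Y}$. Before doing so, we check that Assumption \ref{asump:gen_McDiarmid_non_unif_MGF} holds and read off the mean-difference bounding functions. The rest is a choice of parameters $p_i,q_i,s_{i,j}$ followed by the rescaling $\lambda\mapsto n\lambda$.

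\emph{Step 1 (pointwise difference bounds).} Since $\mathcal{G}$ is countable, $\phi_\pm$ are measurable, with values in $(-\infty,\infty]$ a priori. A supremum of differences dominates the difference of suprema. Hence changing $x_i$ to $\tilde x_i$ changes $\phi_\pm(z)$ by at most $(nm)^{-1}\sum_{j}h_{\mathcal{X}}(x_i,\tilde x_i,y_{i,j})$. Likewise, changing $y_{i,j}$ to $\tilde y_{i,j}$ changes it by at most $(nm)^{-1}h_{\mathcal{Y}}(x_i,y_{i,j},\tilde y_{i,j})$. Because $\phi_t$ is a convex combination, the same bounds hold for $\phi_t$.

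\emph{Step 2 (integrability).} Fix a reference point $\tilde z$. Comparing $z$ with $\tilde z$ one coordinate at a time via Step 1 gives $|\phi_\pm(z)-\phi_\pm(\tilde z)|\le$ a sum of $h_{\mathcal{X}}$ and $h_{\mathcal{Y}}$ terms. Finiteness of the Orlicz norms in Assumption \ref{assump:ULLN} implies $h_{0,\mathcal{Y}},k_{\mathcal{Y}}\in L^1(P_Y^2)$ and $r_{\mathcal{X}}\in L^2(P_X)$, and $h_{\mathcal{X}}\in L^1$ is assumed. By Fubini, for a.e.\ choice of $\tilde z$ these sums are integrable both over $Z$ and over any partial collection of coordinates with the remaining ones fixed. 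Finiteness of $\phi_\pm(\tilde z)$ needs a separate argument. One has $\phi_+(\tilde z)\ge -\phi_-(\tilde z)$, and averaging the bound $|\tfrac1{nm}\sum g(x_i,y_{i,j}) - E[g]|\le E_{\tilde Z}[\text{sum of }h\text{'s}]$ over an independent copy gives a $g$-uniform finite bound. This yields $\phi_t(Z)\in L^1$ and the integrability item of Assumption \ref{asump:gen_McDiarmid_non_unif_MGF}. I expect this bookkeeping, in particular the need for a finite a.e.\ reference point, to be the main technical obstacle; the rest is algebra.

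\emph{Step 3 (mean-difference bounds and parameters).} Take expectations over the later variables in Step 1. This gives $h_{\mathcal{X}_i}(x,\tilde x)=n^{-1}E_{P_Y}[h_{\mathcal{X}}(x,\tilde x,\cdot)]$, so $\xi_{\mathcal{X}_i}(\lambda)=\xi_X(\lambda/n)$, extended by $+\infty$ for $\lambda/n\ge K_X$. For the $\mathcal{Y}$ variables, $h_{\mathcal{Y}_{i,j}}=(nm)^{-1}h_{0,\mathcal{Y}}+r_{\mathcal{X}}\cdot (nm)^{-1}k_{\mathcal{Y}}$. Choosing $s_{i,j}=s$ gives $\alpha_{i,j}\|r_{\mathcal{X}}\|^2=\alpha^2/(nm)^2$ and $\beta_{i,j}=\beta/(nm)$. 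Now choose $q_i=\rho m>1$, so that $p_i=(1-(\rho m)^{-1})^{-1}$, which is legitimate because $\rho m>1$. Then $q_i\lambda^2\sum_j\alpha_{i,j}\|r_{\mathcal{X}}\|^2=\rho\alpha^2\lambda^2/n^2$.

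\emph{Step 4 (rescaling).} Substitute these choices into \eqref{eq:ULLN_bound_non_unif_MGF_Phi1_Phi2_case} and write $\lambda=n\lambda'$. The exponent then becomes $n$ times
\[
\lambda'\epsilon-p^{-1}\xi_X(p\lambda')-\frac{2\alpha^2\lambda'^2}{m(1-\rho\alpha^2\lambda'^2)}-\frac{2(\beta\lambda')^2}{m(1-(\beta\lambda'/m)^2)},\qquad p=(1-(\rho m)^{-1})^{-1}.
\]
The constraints $\lambda<\lambda_{\max}$ from \eqref{eq:lambda_max_def}, together with finiteness of $\xi_X$, translate to $\lambda'<\min\{m/\beta,(\rho\alpha^2)^{-1/2},K_X/p\}$. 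This is exactly the stated $\lambda_{\max}$ and gives \eqref{eq:ULLN_non_uniform_mean_diff_bounds}.
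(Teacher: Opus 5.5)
Your proposal is correct and follows essentially the same route as the paper: verify Assumption~\ref{asump:gen_McDiarmid_non_unif_MGF} with $h_{\mathcal{X}_i}=n^{-1}E_{P_Y}[h_{\mathcal{X}}(x_i,\tilde x_i,\cdot)]$ and $h_{\mathcal{Y}_{i,j}}=(nm)^{-1}h_{\mathcal{Y}}$, apply Theorem~\ref{thm:ULLN_bound_non_unif_MGF_Phi1_Phi2_case} with $s_{i,j}=s$ and $q_i=\rho m$, and rescale $\lambda\mapsto n\lambda$. The only difference is cosmetic and lies in the integrability step. The paper routes through a single point $(x_0,y_0)\in\mathcal{X}\times\mathcal{Y}$ to bound $|g|$ uniformly in $g$, whereas you route through a reference point $\tilde z$ of the product space. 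In both cases you must change the $y$-coordinates before the $x$-coordinate, as the paper does, so that the resulting bound is integrable for every fixed $x_i$; the needed term $h_{\mathcal{Y}}(x_i,\cdot,\tilde y)$ is integrable for all $x_i$ by the product structure, whereas $h_{\mathcal{X}}(x_i,\tilde x_i,\cdot)$ need not be.
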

\begin{remark}\label{remark:ULLNlarge_m_limit} 
Under appropriate assumptions on $\xi_X$, for  sufficiently small $\rho$ and sufficiently  large $m$ we have
  \begin{align}\label{eq:ULLN_non_uniform_mean_diff_bounds_m_large}
       &\mathbb{P}\left(\pm(\phi_t(Z)-\mathbb{E}[\phi_t(Z)])\geq \epsilon\right)
       \approx  \exp\left(-n\sup_{\lambda \geq 0}\!\left\{\lambda \epsilon-  
        \xi_X(\lambda)  
  \right\}\right)\,.
     \end{align}
     In light of \eqref{eq:xi_X}, the large $m$ (approximate) result \eqref{eq:ULLN_non_uniform_mean_diff_bounds_m_large}
       can be viewed as the result of our method (or, equivalently in this case, the result of the methods in \cite{birrell2025concentration} and \cite{kontorovich2014concentration}), as  applied to  the family of function $x\mapsto E_{P_Y}[g(x,\cdot)]$ . In other words, for large $m$ our result approaches the bound on the corresponding problem with the $Y$-dependence integrated out.  Thus our result captures the benefit of sample reuse, matching the  motivating discussion from Section \ref{sec:intro}.
\end{remark}
 \begin{proof}
 Note that the assumptions imply $h_{0,\mathcal{Y}},k_{\mathcal{Y}}\in L^1(P_Y^2)$ and $r_{\mathcal{X}}\in L^1(P_X)$.  Therefore $h_{\mathcal{Y}}\in L^1(P_X\times P_Y\times P_Y)$ and
 \begin{align}
 \int\left(\int(h_{\mathcal{X}}(x,x_0,y_0)+h_{\mathcal{Y}}(x,y,y_0)+h_{0,\mathcal{Y}}(y,y_0)+k_{\mathcal{Y}}(y,y_0))(P_X\times P_Y)(dxdy)\right)(P_X\times P_Y)(dx_0dy_0)<\infty\,.
 \end{align}
 Hence there exists $x_0\in\mathcal{X}$, $y_0\in\mathcal{Y}$ such that  $h_{\mathcal{X}}(\cdot,x_0,y_0)\in L^1(P_X)$, $h_{\mathcal{Y}}(\cdot,\cdot,y_0)\in L^1(P_X\times P_Y)$, $h_{0,\mathcal{Y}}(\cdot,y_0)\in L^1(P_Y)$, and $k_{\mathcal{Y}}(\cdot,y_0)\in L^1(P_Y)$.

Now compute the bounds
 \begin{align}
 |g(x,y)|\leq &|g(x,y)-g(x,y_0)|+|g(x,y_0)-g(x_0,y_0)|+|g(x_0,y_0)|\\
 \leq& {h}_{\mathcal{Y}}(x,y,y_0)+h_{\mathcal{X}}(x,x_0,y_0)+|g(x_0,y_0)|\notag
 \end{align}
 and
  \begin{align}
   |g(\tilde{x},\tilde{y})-g(x,y)|\leq &    |g(\tilde{x},\tilde{y})-g(x_0,y_0)|+|g(x_0,y_0)-g(x,y)|\\
   \leq&  |g(\tilde{x},\tilde{y})-g(\tilde{x},y_0)|+|g(\tilde{x},y_0)-g(x_0,y_0)|+|g(x,y)-g(x,y_0)|+|g(x,y_0)-g(x_0,y_0)|\notag\\
      \leq&    
      h_{\mathcal{Y}}(\tilde{x},\tilde{y},y_0)+h_{\mathcal{X}}(\tilde{x},x_0,y_0)  
      +h_{\mathcal{Y}}(x,y,y_0) +h_{\mathcal{X}}(x,x_0,y_0)\,.\notag
  \end{align}
  Therefore $ \mathcal{G}\subset L^1(P_X\times P_Y)$ and
  \begin{align}
  |\phi_\pm(z)|\leq &\sup_{g\in\mathcal{G}}\left| \frac{1}{nm}\sum_{i=1}^n\sum_{j=1}^m g(x_i,y_{i,j})-E_{P_X\times P_Y}[g]\right|\\
 \leq& \frac{1}{nm}\sum_{i=1}^n\sum_{j=1}^m E_{(x,y)\sim P_X\times P_Y}[\left|g(x_i,y_{i,j})-g(x,y)\right|]\notag\\
  \leq&\sup_{g\in\mathcal{G}} \frac{1}{nm}\sum_{i=1}^n\sum_{j=1}^m \left( h_{\mathcal{Y}}(x_i,y_{i,j},y_0)+h_{\mathcal{X}}(x_i,x_0,y_0)  
          +E_{P_X\times P_Y}[ h_{\mathcal{Y}}(\cdot,\cdot,y_0)] +E_{P_X}[ h_{\mathcal{X}}(\cdot,x_0,y_0) ]\right)<\infty\,.\notag
  \end{align}
 
  Hence  
     $\phi_\pm$ is finite and
    \begin{align}
    &\mathbb{E}[|\phi_\pm(Z)|]
    \leq  2 E_{P_X\times P_Y}[h_{\mathcal{Y}}(\cdot,\cdot,y_0)]+2E_{P_X}[h_{\mathcal{X}}(\cdot,x_0,y_0)]  <\infty\,.
    \end{align}
    Therefore $\phi_t\in L^1(\mathbb{P})$ for all $t\in[0,1]$.

Next we compute
 \begin{align}
   &\mathbb{E}[|\phi_\pm(z^{1:\ell},Z^{\ell+1:n})|]\\
   \leq &\frac{1}{nm}\sum_{i=1}^\ell\sum_{j=1}^m \left( h_{\mathcal{Y}}(x_i,y_{i,j},y_0)+h_{\mathcal{X}}(x_i,x_0,y_0)  
              +E_{P_X\times P_Y}[ h_{\mathcal{Y}}(\cdot,\cdot,y_0)] +E_{P_X}[ h_{\mathcal{X}}(\cdot,x_0,y_0) ]\right)\notag\\
              &    
              +2\frac{n-\ell}{n} \left( 
                            E_{P_X\times P_Y}[ h_{\mathcal{Y}}(\cdot,\cdot,y_0)] +E_{P_X}[ h_{\mathcal{X}}(\cdot,x_0,y_0) ]\right)<\infty\notag
              \end{align}

and
\begin{align}
&\mathbb{E}[|\phi_\pm(z^{1:i^\prime-1},x_{i^\prime},y_{i^\prime}^{1:\ell},Y_{i^\prime}^{\ell+1:m_i},Z^{i^\prime+1:n})|]\\
\leq&  \frac{1}{nm}\sum_{i=1}^{i^\prime-1}\sum_{j=1}^m \left( h_{\mathcal{Y}}(x_i,y_{i,j},y_0)+h_{\mathcal{X}}(x_i,x_0,y_0)  
+E_{P_X\times P_Y}[ h_{\mathcal{Y}}(\cdot,\cdot,y_0)] +E_{P_X}[ h_{\mathcal{X}}(\cdot,x_0,y_0) ]\right)\notag\\              
&+\frac{1}{nm}\sum_{j=1}^{\ell} \left( h_{\mathcal{Y}}(x_{i^\prime},y_{i^\prime,j},y_0)+h_{\mathcal{X}}(x_{i^\prime},x_0,y_0)  
+E_{P_X\times P_Y}[ h_{\mathcal{Y}}(\cdot,\cdot,y_0)] +E_{P_X}[ h_{\mathcal{X}}(\cdot,x_0,y_0) ]\right)\notag\\
&                           
+\frac{1}{nm}\sum_{j=\ell+1}^m \left( {E}_{P_Y}[h_{0,\mathcal{Y}}(\cdot,y_0)]+r_{\mathcal{X}}(x_{i^\prime}){E}_{P_Y}[k_{\mathcal{Y}}(\cdot,y_0)]\right)\notag\\
&+\frac{1}{nm}\sum_{j=\ell+1}^m \left(h_{\mathcal{X}}(x_{i^\prime},x_0,y_0)                                                         +E_{P_X\times P_Y}[ h_{\mathcal{Y}}(\cdot,\cdot,y_0)] +E_{P_X}[ h_{\mathcal{X}}(\cdot,x_0,y_0) ]\right)\notag\\
& 
+2\frac{n-i^\prime}{n}  \left( E_{P_X\times P_Y}[h_{\mathcal{Y}}(\cdot,\cdot,y_0)]+E_{P_X}[h_{\mathcal{X}}(\cdot,x_0,y_0)]  
\right)<\infty\,.\notag
\end{align}
 Therefore $ \phi_t(z^{1:\ell},Z^{\ell+1:n}),\phi_t(z^{1:i^\prime-1},x_{i^\prime},y_{i^\prime}^{1:\ell},Y_{i^\prime}^{\ell+1:m_i},Z^{i^\prime+1:n})\in L^1(\mathbb{P})$.

  Now we compute one-component  mean-difference bounds for $\mathcal{X}$:\\
   \begin{align}
   &\left|\mathbb{E}[\phi_\pm(z^{1:i-1},x_i,Y_i^{1:m_i},Z^{i+1:n})]-\mathbb{E}[\phi_\pm(z^{1:i-1},\tilde{x}_i,Y_i^{1:m_i},Z^{i+1:n})]\right|\\
 \leq   &\mathbb{E}[\left|\phi_\pm(z^{1:i-1},x_i,Y_i^{1:m_i},Z^{i+1:n})-\phi_\pm(z^{1:i-1},\tilde{x}_i,Y_i^{1:m_i},Z^{i+1:n})\right|]\notag\\
   \leq&\mathbb{E}\left[\sup_{g\in\mathcal{G}}\left|\frac{1}{nm} \sum_{j=1}^m g(x_i,Y_{i,j})-\frac{1}{nm} \sum_{j=1}^m g(\tilde{x}_i,Y_{i,j})\right|\right]\notag\\
   \leq&\frac{1}{n} {E}_{P_Y}\left[ h_{\mathcal{X}}(x_i,\tilde{x}_i,\cdot)\right]\coloneqq h_{\mathcal{X}_i}(x_i,\tilde{x}_i)\notag\,.
   \end{align}
Hence we also have
\begin{align}
   &\left|\mathbb{E}[\phi_t(z^{1:i-1},x_i,Y_i^{1:m_i},Z^{i+1:n})]-\mathbb{E}[\phi_t(z^{1:i-1},\tilde{x}_i,Y_i^{1:m_i},Z^{i+1:n})]\right|
 \leq h_{\mathcal{X}_i}(x_i,\tilde{x}_i)\,.
\end{align}

Next we obtain one-component  mean-difference bounds for $\mathcal{Y}$:
\begin{align}
 &\left|\mathbb{E}[\phi_\pm(z^{1:i-1},x_i,y_i^{1:j},Y_i^{j+1:m_i},Z^{i+1:n})]-\mathbb{E}[\phi_\pm(z^{1:i-1},x_i,y_i^{1:j-1},\tilde{y}_{i,j},Y_i^{j+1:m_i},Z^{i+1:n})]\right|\\
 \leq&   \mathbb{E}[|\phi_\pm(z^{1:i-1},x_i,y_i^{1:j},Y_i^{j+1:m_i},Z^{i+1:n})-\phi_\pm(z^{1:i-1},x_i,y_i^{1:j-1},\tilde{y}_{i,j},Y_i^{j+1:m_i},Z^{i+1:n})|]\notag\\
 \leq&\sup_{g\in\mathcal{G}}\left|\frac{1}{nm}  g(x_i,y_{i,j})
 -
 \frac{1}{nm} g(x_i,\tilde{y}_{i,j}) \right|\notag\\
 \leq&\frac{1}{nm}   h_{\mathcal{Y}}(x_i,y_{i,j},\tilde{y}_{i,j})\coloneqq h_{\mathcal{Y}_{i,j}}(x_i,y_{i,j},\tilde{y}_{i,j})\,.\notag
 \end{align}
  Hence we also have
 \begin{align}
  &\left|\mathbb{E}[\phi_t(z^{1:i-1},x_i,y_i^{1:j},Y_i^{j+1:m_i},Z^{i+1:n})]-\mathbb{E}[\phi_t(z^{1:i-1},x_i,y_i^{1:j-1},\tilde{y}_{i,j},Y_i^{j+1:m_i},Z^{i+1:n})]\right|\\
  \leq&  h_{\mathcal{Y}_{i,j}}(x_i,y_{i,j},\tilde{y}_{i,j})\,.\notag
 \end{align}
 
 Finally, we have
 \begin{align}
 \int \cosh(\lambda h_{\mathcal{X}_i}) dP_X^2
 =& \int \cosh(n^{-1}\lambda E_{P_Y}[h_{\mathcal{X}}(x,\tilde{x},\cdot)])P_X^2(dxd\tilde{x})\\
 \leq & \exp(\xi_X(|\lambda|/n))\,\,\,\,\text{ for all }|\lambda|<n K_X\,.\notag
 \end{align}

 Therefore  the assumptions of  Theorem \ref{thm:ULLN_bound_non_unif_MGF_Phi1_Phi2_case} hold for $\phi_t$ with 
 \begin{align}
 \xi_{X_i}(\lambda)\coloneqq\begin{cases}
 \xi_X(\lambda/n) &\text{ if }0\leq \lambda<nK_X\\
 \infty&\text{ otherwise}
 \end{cases}
 \end{align}
 $h_{0,\mathcal{Y}_{i,j}}\coloneqq \frac{1}{nm}h_{0,\mathcal{Y}}$, $r_{\mathcal{X}_i}\coloneqq \frac{1}{nm}r_{\mathcal{X}}$, $k_{\mathcal{Y}_{i,j}}\coloneqq k_{\mathcal{Y}}$. Hence, after substituting the relevant quantities into \eqref{eq:ULLN_bound_non_unif_MGF_Phi1_Phi2_case},  changing variables from $\lambda$ to $n\lambda$, and then simplifying, we obtain
 \begin{align}\label{eq:ULLN_non_uniform_mean_diff_bounds_temp}
      &\mathbb{P}\left(\pm(\phi_t(Z)-\mathbb{E}[\phi_t(Z)])\geq \epsilon\right)\\
      \leq&  \exp\left(-n\!\sup_{\lambda\in[ 0,\lambda_{\max})}\!\left\{\lambda \epsilon- p^{-1} 
       \xi_X(p\lambda)  
      - m^{-1}\left(\frac{ 2(\alpha \lambda)^2}{1-m^{-1}q(\alpha\lambda)^2}+\frac{2(\beta\lambda )^{2}}{1- ( m^{-1}\beta\lambda)^{2}}\right)\right\}\right)\notag
    \end{align}
 for any choice of conjugate exponents $p,q\in(1,\infty)$.  In order to ensure the upper bound approaches (as $m\to\infty$) the corresponding result with the $Y$-dependence integrated out, choose $\rho>0$   such that $\rho m>1$ and apply  the above to $q=\rho m$, $p=q/(q-1)=(1-(\rho m)^{-1})^{-1}$; after further simplification, this yields  the claimed result.
 \end{proof}
 
 Under additional assumptions, the expected value of $\phi_t(Z)$ in \eqref{eq:ULLN_non_uniform_mean_diff_bounds} can be bounded using Lemma 3.9 from \cite{birrell2025concentration}; this result is repeated below for the convenience of the reader.
 \begin{assumption}\label{assump:mean_bound}
 Let $(\Theta,d_\Theta)$ be a non-empty pseudometric space, $(\mathcal{X},P_X)$, $(\mathcal{Y},P_Y)$ be   probability spaces, and $g_\theta:\mathcal{X}\times\mathcal{Y}\to\mathbb{R}$, $\theta\in\Theta$,  be measurable.  Suppose that we have the following:
 \begin{enumerate}
 \item  $c:\Theta\to\mathbb{R}$ and   $h:\mathcal{X}\times\mathcal{Y}\to[0,\infty)$  such that $h\in L^1(P_X\times P_Y)$ and $|g_\theta-c_\theta|\leq h$ for all $\theta\in\Theta$.
 \item  A measurable $L:\mathcal{X}\times\mathcal{Y}\to (0,\infty)$ such that  $\theta\mapsto g_\theta(x,y)-c_\theta$ is $L(x,y)$-Lipschitz  with respect to $d_\Theta$ for all $x\in\mathcal{X}$, $y\in\mathcal{Y}$.
 \end{enumerate}
 \end{assumption}
 \begin{remark}
We allow for a nonzero $c_\theta$ in order to cancel a possible  parameter that shifts the output of $g_\theta$ (e.g., as in the bias parameter of the output layer of a neural network).
 \end{remark}
 
 \begin{lemma}\label{lemma:mean_bound_SIMODS}
 Under Assumption \ref{assump:mean_bound},  for $n,m\in\mathbb{Z}^+$ and $\Theta_0$ a countable  subset of $\Theta$ we have { 
 \begin{align}\label{eq:mean_bound}
 &E_{P^{n,m}}\!\left[\sup_{\theta\in\Theta_0}\left\{\pm\left(\frac{1}{nm}\sum_{i=1}^n  \sum_{j=1}^m g_\theta(x_i,y_{i,j})
 -E_{P_X\times P_Y}[ g_\theta]\right)\right\}\right]\leq C_{n,m}\,,
 \end{align}
 where      $P^{n,m}\coloneqq \prod_{i=1}^n \left(P_X\times \prod_{j=1}^m P_Y\right)$,
 \begin{align}\label{C_nm_def}
 C_{n,m}\coloneqq& 2\left((1-1/m)\int  E_{P_Y(dy)}[ L(x,y)] ^2 P_X(dx)+\frac{1}{m} E_{P_X\times P_Y}[L(x,y)^2]\right)^{1/2}\\
 &\times\inf_{\eta>0}\left\{4\eta+1_{\eta<D_\Theta}8\sqrt{2}n^{-1/2}\int_{\eta/2}^{D_\Theta/2}\sqrt{\log N(r,\Theta,d_\Theta)}dr\right\}\notag\,,
 \end{align}}
$D_\Theta\coloneqq\sup_{\theta_1,\theta_2\in\Theta}d_\Theta(\theta_1,\theta_2)$ is the diameter of $\Theta$, and $N(r,\Theta,d_\Theta)$ denotes the size of a minimal $r$-cover of $\Theta$ with respect to the pseudometric $d_\Theta$.

Moreover, when $\Theta$ is the unit ball (under some norm) in $\mathbb{R}^k$   and $d_\Theta$ is the corresponding metric  then a standard computation   (see, e.g., Example 5.8 in \cite{wainwright2019high}) gives the entropy integral bound
 \begin{align}\label{eq:entropy_int_bound_unit_ball_main}
 \inf_{\eta>0}\left\{4\eta+1_{\eta<  D_\Theta}8\sqrt{2}n^{-1/2}\int_{\eta/2}^{ D_\Theta/2}\sqrt{\log  N(r,\Theta,d_{\Theta})}  dr\right\}  \leq 32 (k/n)^{1/2}\,,
 \end{align}
 which implies  $C_{n,m}=O(n^{-1/2})$, provided that $L\in L^2(P_X\times P_Y)$.
 \end{lemma}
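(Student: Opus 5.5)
The plan is the standard route: symmetrization, followed by Dudley's chaining applied conditionally on the $X$'s. The one real difference is that we symmetrize only over the $n$ blocks $Z_i$, not over all $nm$ pairs, which is what produces the $n^{-1/2}$ rate together with the mixed $L$-moment factor.

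\textbf{Step 1: reduce and symmetrize.} First reduce to $\tilde g_\theta\coloneqq g_\theta-c_\theta$. The centered empirical process is unchanged by this shift, so we may assume $|g_\theta|\leq h$ and that $\theta\mapsto g_\theta(x,y)$ is $L(x,y)$-Lipschitz. Define the block functions
\[
G_\theta(z_i)\coloneqq\frac{1}{m}\sum_{j=1}^m g_\theta(x_i,y_{i,j}),
\]
which are i.i.d. in $i$ under $P^{n,m}$, each with mean $E_{P_X\times P_Y}[g_\theta]$. They are dominated by the integrable envelope $m^{-1}\sum_j h(x_i,y_{i,j})$, so all expectations below are finite. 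Standard symmetrization for a countable class then gives
\[
E\left[\sup_{\theta\in\Theta_0}\pm\Big(\frac1n\sum_i G_\theta(Z_i)-E[G_\theta]\Big)\right]\leq 2E\left[\sup_{\theta\in\Theta_0}\frac1n\sum_i\sigma_iG_\theta(Z_i)\right],
\]
with Rademacher signs $\sigma_i$.

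\textbf{Step 2: chaining conditional on $Z$.} Condition on $Z$. The Rademacher process $R_\theta\coloneqq n^{-1/2}\sum_i\sigma_iG_\theta(Z_i)$ is sub-Gaussian with respect to the random pseudometric $\|G_\theta-G_{\theta'}\|_{L^2(P_n)}$, where $P_n$ is the empirical measure of the blocks. The Lipschitz property bounds this metric by
\[
\mathcal{L}_n(Z)\,d_\Theta(\theta,\theta'),\qquad \mathcal{L}_n(Z)\coloneqq\Big(\frac1n\sum_i\Big(\frac1m\sum_jL(X_i,Y_{i,j})\Big)^2\Big)^{1/2}.
\]
Apply the truncated Dudley bound, in the form $E\sup\le 2E\sup_{\text{net}}+\ldots$ with the $4\eta$ and $8\sqrt2\int_{\eta/2}^{D/2}$ constants as in the standard statement (e.g. Wainwright, Thm.~5.22). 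Because covering numbers scale as $N(r,\Theta,\mathcal{L}_n d_\Theta)=N(r/\mathcal{L}_n,\Theta,d_\Theta)$, a change of variables pulls $\mathcal{L}_n(Z)$ out as a multiplicative factor in front of the $\inf_\eta\{\cdots\}$ expression. The indicator $1_{\eta<D_\Theta}$ handles the case in which a single point covers. Countability of $\Theta_0$ avoids any measurability issues.

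\textbf{Step 3: the moment factor.} Taking the outer expectation, Jensen gives $E[\mathcal{L}_n]\le (E[\mathcal{L}_n^2])^{1/2}$. Expanding the square,
\[
E\Big(\frac1m\sum_jL(X,Y_j)\Big)^2=\frac{1}{m^2}\Big(m\,E[L^2]+m(m-1)\int E_{P_Y}[L(x,\cdot)]^2P_X(dx)\Big).
\]
The off-diagonal terms factor through conditional independence given $X$, and this is exactly the bracket in \eqref{C_nm_def}. Collecting constants yields $C_{n,m}$.

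\textbf{Step 4: the unit ball.} When $\Theta$ is a unit ball in $\mathbb{R}^k$, use the volumetric bound $N(r)\le(1+2/r)^k$ together with $D_\Theta\le 2$ and $\eta\to0$. The integral $\int_0^1\sqrt{k\log(1+2/r)}\,dr$ is a bounded multiple of $\sqrt k$, giving \eqref{eq:entropy_int_bound_unit_ball_main} and hence $C_{n,m}=O(n^{-1/2})$ whenever $L\in L^2$.

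\textbf{Main obstacle.} The delicate point is Step 2: the chaining must be done with the block-level metric so that the constant comes out as $\mathcal{L}_n$ rather than a cruder $\max$ or an $L^2$ norm over all $nm$ pairs. Getting the exact constants $4$ and $8\sqrt2$ also depends on the chosen Dudley variant, and I would track that carefully.
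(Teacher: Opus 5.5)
Your proposal is correct and follows the intended route. The paper does not reprove this lemma but quotes it as Lemma 3.9 of \cite{birrell2025concentration}, and the shape of $C_{n,m}$ is exactly what your Steps 1--3 produce: an entropy integral carrying $n^{-1/2}$ rather than $(nm)^{-1/2}$, times $(E[(m^{-1}\sum_{j}L(X,Y_j))^2])^{1/2}$, which comes from block-level symmetrization, chaining conditional on $Z$ with the block Lipschitz constant $\mathcal{L}_n(Z)$, and Jensen.

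The one thing to fix is where the constants come from. Wainwright's Theorem 5.22 as stated only gives $4\eta+32n^{-1/2}\int_{\eta/2}^{D_\Theta}\sqrt{\log N(r,\Theta,d_\Theta)}\,dr$. Instead, run the one-sided chaining directly: put centers in $\Theta$, use tree-structured nets at scales $D_\Theta 2^{-k}$, apply Massart's finite-class lemma to each link, and use Cauchy--Schwarz for the final link. This gives $2\eta+4\sqrt{2}\,n^{-1/2}\int_{\eta/2}^{D_\Theta/2}\sqrt{\log N(r,\Theta,d_\Theta)}\,dr$, so the stated bound holds with room to spare.
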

 \begin{remark}\label{remark:C_nm_behavior}
The bound
 \begin{align}
 \int  E_{P_Y(dy)}[ L(x,y)] ^2 P_X(dx)\leq E_{P_X\times P_Y}[L(x,y)^2] 
 \end{align}
 follows from Jensen's inequality. Thus $C_{n,m}$ is non-increasing in $m$, thus capturing the benefit of sample reuse.  Furthermore, when $m$ is large  \eqref{C_nm_def} can be interpreted as (approximately) justifying the use of $E_{P_Y(dy)}[L(x,y)]$ in place of $L(x,y)$, i.e., the $y$ dependence has been integrated out.  Thus the result is (approximately) what one would obtain by replacing the empirical $Y$-averages with an expectation over $P_Y$.  This aligns with the intuition behind the use of many $Y$-samples per $X$ sample, as discussed in Section \ref{sec:intro}.

 \end{remark}

 \subsection{Concentration Inequalities for Stochastic Optimization Problems}\label{sec:stoch_opt}
 
 Here we  build on \ref{thm:ULLN_non_uniform_mean_diff_bounds}  to obtain concentration inequalities for stochastic optimization problems  which capture the benefit of sample reuse and allow for non-uniform MGF bounds. Our goal is to bound the error inherent in solving   the empirical problem,
 \begin{align}\label{eq:stoch_opt_emp_prob}
 \theta^*_{n,m}\in\text{argmin}_\theta\frac{1}{nm}\sum_{i=1}^n\sum_{j=1}^m g_\theta(X_i,Y_{i,j})\,,
 \end{align}
 in place of the population problem, $\inf_\theta E_{P_X\times P_Y}[g_\theta]$. More specifically, we will derive concentration inequalities for the excess statistical error
 \begin{align}\label{eq:stoch_opt_stat_error}
  E_{P_X\times P_Y}[g_{\theta^*_{n,m}}]- \inf_\theta E_{P_X\times P_Y}[g_\theta]\,.
 \end{align}

 We will require the following additional assumptions (which also allow $\theta^*_{n,m}$ to only approximately solve the empirical problem \eqref{eq:stoch_opt_emp_prob}).
 \begin{assumption}\label{assump:stoch_opt_nonuniform_bound}
 In addition to Assumption \ref{assump:mean_bound}, assume that
 $\Theta$ is equipped with a $\sigma$-algebra  and $g:\Theta\times \mathcal{X}\times\mathcal{Y}\to\mathbb{R}$ is measurable. 
 
   Suppose we also have a probability space $(\Omega,\mathbb{P})$ and i.i.d. $\mathcal{X}$-valued random variables $X_i$, $i=1,...,n$, distributed as $P_X$ and i.i.d. $\mathcal{Y}$-valued random variables $Y_{i,j}$, $i=1,...,n$, $j=1,...,m$, distributed as $P_Y$; suppose the $X_i$, $Y_{i,j}$ are all mutually independent.  Suppose there exists $\psi\in L^1(P_X\times P_Y)$ such that $g\geq \psi$.  Let $\epsilon_{n,m}^{\mathrm{opt}}\geq 0$ and suppose we have a $\Theta$-valued random variable $\theta^*_{n,m}$ that satisfies
   \begin{align}
   \frac{1}{nm}\sum_{i=1}^n\sum_{j=1}^m g_{\theta^*_{n,m}}(X_i,Y_{i,j})\leq \inf_{\theta\in\Theta}\frac{1}{nm}\sum_{i=1}^n\sum_{j=1}^m g_{\theta}(X_i,Y_{i,j})+\epsilon^{\mathrm{opt}}_{n,m}\,\,\,\,\text{$\mathbb{P}$-a.s.}
   \end{align}
   (note that $\inf_{\theta\in\Theta}\frac{1}{nm}\sum_{i=1}^n\sum_{j=1}^m g_{\theta}(X_i,Y_{i,j})$ is finite). 
 \end{assumption}
 \begin{remark}
 By separability of a pseudometric space, we mean there exists a countable subset  $\Theta_0\subset\Theta$ such that for all $\theta\in\Theta$ there exists a sequence $\theta_j\in\Theta_0$ such that $d_\Theta(\theta_j,\theta)\to 0$ (i.e., a countable dense subset).  
 \end{remark}

 We will now derive a concentration inequality for the excess statistical error \eqref{eq:stoch_opt_stat_error}.
 \begin{theorem}\label{thm:stoch_opt}
  Let $(\Theta,d_\Theta)$ be a non-empty pseudo-metric space, $(\mathcal{X},P_X)$, $(\mathcal{Y},P_Y)$ be   probability spaces, and $g_\theta:\mathcal{X}\times\mathcal{Y}\to\mathbb{R}$, $\theta\in\Theta$ that 
  satisfy  Assumption \ref{assump:ULLN} (with $\mathcal{G}\coloneqq\{g_\theta:\theta\in\Theta\}$) and Assumption \ref{assump:stoch_opt_nonuniform_bound}.

      Let    $s\in(0,1)$ and   $\rho>0$   such that $\rho m>1$.  Then   for all $\epsilon>0$ we have
\begin{align}
&\mathbb{P}\left(E_{P_X\times P_Y}[g_{\theta^*_{n,m}}]- \inf_\theta E_{P_X\times P_Y}[g_\theta]\geq \epsilon+2C_{n,m}+\epsilon_{n,m}^{opt}\right)\\
        \leq&  \exp\left(-n\!\sup_{\lambda\in[ 0,\lambda_{\max})}\!\left\{\lambda \epsilon/2- (1-(\rho m)^{-1}) 
         \xi_X\left((1-(\rho m)^{-1})^{-1}\lambda\right)  
        - m^{-1}\left(\frac{ 2(\alpha \lambda)^2}{1-\rho(\alpha\lambda)^2}+\frac{2(\beta\lambda )^{2}}{1- ( m^{-1}\beta\lambda)^{2}}\right)\right\}\right)\,,\notag
      \end{align}  
  where $C_{n,m}$ was defined in \eqref{C_nm_def} and
        \begin{align}   
                \lambda_{\max} \coloneqq & \min\left\{ m\beta^{-1}, \left(\rho\alpha^2\right)^{-1/2} ,(1-(\rho m)^{-1})K_X\right\}\,,\\
        \alpha\coloneqq &s^{-1}\|r_{\mathcal{X}}\|_{\Phi_2,P_{X}} \|k_{\mathcal{Y}}\|_{\Phi_2,P^2_{Y}}\,,\,\,\,\,
        \beta\coloneqq  ({1-s})^{-1}\|h_{0,\mathcal{Y}}\|_{\Phi_{1},P^2_{Y}}\,.
               \end{align}
 \end{theorem}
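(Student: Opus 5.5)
We follow the standard error decomposition for stochastic optimization and reduce the claim to Theorem \ref{thm:ULLN_non_uniform_mean_diff_bounds} and Lemma \ref{lemma:mean_bound_SIMODS}.

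\textbf{Deterministic reduction.} For any $\theta\in\Theta$, write $\hat{E}_{n,m}[g_\theta]\coloneqq\frac{1}{nm}\sum_{i,j}g_\theta(X_i,Y_{i,j})$. Then
\begin{align}
E_{P_X\times P_Y}[g_{\theta^*_{n,m}}]-E_{P_X\times P_Y}[g_\theta]
\leq& \left(E[g_{\theta^*_{n,m}}]-\hat{E}_{n,m}[g_{\theta^*_{n,m}}]\right)+\epsilon^{opt}_{n,m}+\left(\hat{E}_{n,m}[g_\theta]-E[g_\theta]\right)\notag\\
\leq& \phi_-+\phi_++\epsilon^{opt}_{n,m}\,,\notag
\end{align}
where $\phi_\pm$ are the suprema over $\mathcal{G}$ from \eqref{eq:phi_pm_def}. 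Taking the infimum over $\theta$ gives
\[
\text{excess error}\leq 2\phi_{1/2}+\epsilon^{opt}_{n,m}.
\]

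\textbf{Passing to a countable class.} Theorem \ref{thm:ULLN_non_uniform_mean_diff_bounds} requires a countable $\mathcal{G}$. Using separability of $\Theta$ (pick a countable dense $\Theta_0$, as in the remark) together with the pointwise Lipschitz property in Assumption \ref{assump:mean_bound}, we have $g_{\theta_k}(x,y)\to g_\theta(x,y)$ whenever $\theta_k\to\theta$, since the $c_\theta$ terms cancel between the empirical and population averages. Dominated convergence via $h$ then gives convergence of $E[g_{\theta_k}]$. Hence the suprema over $\Theta$ and over $\Theta_0$ coincide. Measurability of the random quantity then also follows, using measurability of $g$ and of $\theta^*_{n,m}$.

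\textbf{Probabilistic step.} We have
\[
\mathbb{P}(\text{excess}\geq\epsilon+2C_{n,m}+\epsilon^{opt})\leq\mathbb{P}\bigl(\phi_{1/2}(Z)\geq \epsilon/2+C_{n,m}\bigr).
\]
Lemma \ref{lemma:mean_bound_SIMODS} applied to $\Theta_0$ gives $\mathbb{E}[\phi_\pm(Z)]\leq C_{n,m}$, so $\mathbb{E}[\phi_{1/2}(Z)]\leq C_{n,m}$. Therefore the event is contained in
\[
\{\phi_{1/2}(Z)-\mathbb{E}[\phi_{1/2}(Z)]\geq\epsilon/2\}.
\]
Applying Theorem \ref{thm:ULLN_non_uniform_mean_diff_bounds} with $t=1/2$ and $\epsilon/2$ in place of $\epsilon$ yields exactly the stated bound, including the same $\lambda_{\max}$, $\alpha$ and $\beta$.

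\textbf{Main obstacle.} The main care point is the countable-reduction and measurability step: justifying that restricting to $\Theta_0$ loses nothing, and that $\inf_\theta E[g_\theta]$ is finite. Finiteness should come from $g\geq\psi$ with $\psi\in L^1$ and from $|g_\theta-c_\theta|\leq h$. The rest is bookkeeping.
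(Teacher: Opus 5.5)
Your proposal is correct and follows the paper's proof essentially step for step. The shared steps are: the a.s.\ bound of the excess error by $\phi_+(Z)+\phi_-(Z)+\epsilon^{opt}_{n,m}$; reducing the suprema to a countable dense $\Theta_0$ via dominated convergence; the bound $\mathbb{E}[\phi_{1/2}(Z)]\leq C_{n,m}$ from Lemma \ref{lemma:mean_bound_SIMODS}; and Theorem \ref{thm:ULLN_non_uniform_mean_diff_bounds} applied to $\phi_{1/2}$ at level $\epsilon/2$. One wording fix in the countable reduction: the Lipschitz property gives pointwise convergence of $g_{\theta_k}-c_{\theta_k}$, not of $g_{\theta_k}$ itself, with $h$ as dominating function, and the $c_\theta$'s then cancel in the centered empirical average, which is how the paper argues it.
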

 \begin{remark}
 Based on the discussions in   Remarks \ref{remark:ULLNlarge_m_limit} and \ref{remark:C_nm_behavior}, this result     captures the benefits of sample reuse (large $m$)  in a way that matches the qualitative discussion in Section \ref{sec:intro}, i.e., under appropriate assumptions it approaches the corresponding bound on the problem with the $Y$-dependence integrated out.
 \end{remark}
 \begin{proof}

 Let  $\Theta_0$ a countable dense subset of $\Theta$ and define $\phi_\pm:\prod_{i=1}^n(\mathcal{X}\times\mathcal{Y}^m)\to\mathbb{R}$ by
   \begin{align}
   \phi_\pm(z)\coloneqq&\sup_{\theta \in\Theta}\left\{\pm\left(\frac{1}{nm}\sum_{i=1}^n\sum_{j=1}^m g_\theta(x_i,y_{i,j})-E_{P_X\times P_Y}[g_\theta]\right)\right\}\\
   =&\sup_{\theta \in\Theta_0}\left\{\pm\left(\frac{1}{nm}\sum_{i=1}^n\sum_{j=1}^m g_\theta(x_i,y_{i,j})-E_{P_X\times P_Y}[g_\theta]\right)\right\}\,,\notag
   \end{align}
   where the second line follows from density of $\Theta_0$ along with the dominated convergence theorem (applied to $g_{\theta}-c_\theta$ along appropriate sequences in $\Theta_0$).    We have the $\mathbb{P}$-a.s. error decomposition
     \begin{align}\label{eq:as_opt_bound}
     0\leq &E_{P_X\times P_Y}[g_{\theta^*_{n,m}}]- \inf_\theta E_{P_X\times P_Y}[g_\theta]\\
     \leq&E_{P_X\times P_Y}[g_{\theta^*_{n,m}}]-\frac{1}{nm}\sum_{i=1}^n \sum_{j=1}^m g_{\theta^*_{n,m}}(X_i,Y_{i,j})\notag\\
     &+\inf_{\theta\in\Theta}\frac{1}{nm}\sum_{i=1}^n \sum_{j=1}^m g_{\theta}(X_i,Y_{i,j})- \inf_\theta E_{P_X\times P_Y}[g_\theta]+\epsilon_{n,m}^{\text{opt}}\notag\\
     \leq& \phi_{+}(Z)+\phi_-(Z)+\epsilon_{n,m}^{\text{opt}}\,,\notag
     \end{align}
     where we note that the infima are finite due to the assumed lower bound on $g$.

 Lemma \ref{lemma:mean_bound_SIMODS} implies the  mean bound 
  \begin{align}
  \mathbb{E}\left[\phi_{1/2}(Z)\right]  \leq& C_{n,m}\,,
  \end{align}
  where $\phi_{1/2}\coloneqq \frac{1}{2}\phi_++\frac{1}{2}\phi_-$. 
Combining the above we can compute  
\begin{align}
&\mathbb{P}\left(E_{P_X\times P_Y}[g_{\theta^*_{n,m}}]- \inf_\theta E_{P_X\times P_Y}[g_\theta]\geq \epsilon+2C_{n,m}+\epsilon_{n,m}^{opt}\right)\\
\leq&\mathbb{P}\left(\phi_{1/2}(Z)-\mathbb{E}[\phi_{1/2}(Z)]\geq \epsilon/2\right)\,.\notag
      \end{align}
     Theorem \ref{thm:ULLN_non_uniform_mean_diff_bounds} applied to $\phi_{1/2}$ then yields the claimed result.
  \end{proof}

 \section{Application to Score-Based Generative Models}\label{sec:DSM}
 In this section we use the results developed above to bound the statistical error of   score-based generative models.  This application leverages both key features of our method; it employs auxiliary Gaussian random variables and the inherent coupling between the Gaussian samples and the training data samples naturally leads to the type of non-uniform MGF bounds that can be treated via our results but not by previous approaches.

   Score-based generative models are trained using a pair of forward (noising) and time-reversed (denoising)  SDEs   \cite{songscore}.  Samples from the target distribution are noised and these noise samples are then used to train a score model (vector field) $s^\theta(x,t)$; this is then  used in the denoising SDE to generate new samples.     There are several different methods that can be used to train the score model; here we will focus on the commonly used   denoising score matching (DSM)  objective  \cite{vincent2011connection}.  Training a score model via DSM leads to a stochastic optimization problem of the form studied in Theorem \ref{thm:stoch_opt}. More specifically, 
   given $c\geq 0$, a continuous noise strength $\sigma_t>0$, and timesteps $t_j\in(0,T]$, $j=1,...,J$, a training sample $x\in\mathbb{R}^d$ and a noise sample $y\in\mathbb{R}^d$,    the noising dynamics are defined
   $z_{t}\coloneqq e^{-ct}x+\widetilde{\sigma}_{t}y$,  
   \begin{align}\label{eq:tilde_sigma_t}
   \widetilde{\sigma}_t^2\coloneqq\int_0^t e^{-2c(t-s)}\sigma_s^2 ds\,.
   \end{align}
 The DSM training problem is then a stochastic optimization problem of the form \eqref{eq:stoch_opt_emp_prob} with objective
   \begin{align}\label{eq:DSM_g_theta}
   &g_\theta(x,y)\coloneqq \sum_{j=1}^J \gamma_j \|s^{\theta}( e^{-ct_j}x+\widetilde{\sigma}_{t_j}y,t_j)+\widetilde{\sigma}^{-1}_{t_j}y\|^2\,,
   \end{align}
    employing samples $X_i\sim P_{X}$ (the data distribution), $Y_{i,j}\sim N(0,I_d)$ (the noise distribution); here and in the following, $\|\cdot\|$  will denote the $2$-norm on $\mathbb{R}^d$.  As mentioned above, this problem naturally incorporates both key novel features of the tools developed in this paper: 1) The auxiliary noising random variables are easy to sample at will, thus it is natural to reuse the training data with different Gaussian samples in the noising dynamics (i.e., over multiple training epochs). 2) The score model is generally unbounded and the objective \eqref{eq:DSM_g_theta} couples the $x$ and $y$ samples.  When the support of the data distribution is unbounded, this naturally leads to the necessity of considering MGF bounds of the form \eqref{eq:gen_McDiarmid_non_unif_Y_MGF_assump} which are not uniform in the $x_i$.

 \begin{theorem}\label{thm:DSM}
 Suppose we have a data distribution $P_{X}$ on $\mathbb{R}^d$,  a non-empty separable pseudometric space $(\Theta,d_\Theta)$  equipped with a $\sigma$-algebra, and a measurable  score model $s:\Theta\times\mathbb{R}^d\times [0,T]\to \mathbb{R}^d$  that satisfy the following.
 \begin{enumerate}
 \item  We have $\tilde{q}\geq 1$ such that for all $(z,t)$, the map $\theta\mapsto s^\theta(z,t)$ is Lipschitz (with respect to $(d_\Theta,\|\cdot\|)$) with Lipschitz constant $\alpha_t+\beta_t\|z\|^{\tilde{q}}$.
 \item For all $\theta\in\Theta$   we have a linear growth bound $\|s^\theta(z,t)\|\leq a_t+b_t\|z\|$.
 \item For all $\theta\in\Theta$ and all $t$, the map $z\mapsto s^\theta(z,t)$ is $L_t$-Lipschitz.
 \item The function $x\in\mathbb{R}^d\mapsto\|x\|$ is sub-Gaussian under $P_X$.
 \end{enumerate}
 
 Suppose  we have a probability space $(\Omega,\mathbb{P})$ on which we have  $X_i\sim P_{X}$,   and $Y_{i,j}\sim P_Y\coloneqq N(0,I_d)$, $i=1,...,n$, $j=1,...,m$,  that are all mutually independent, 
  along with a $\Theta$-valued random variable $\theta^*_{n,m}$ that is an approximate minimizer of the empirical DSM problem:
 \begin{align}\label{eq:dsm_empirical_optim}
 &\frac{1}{nm}\sum_{i=1}^n \sum_{j=1}^m g_{\theta^*_{n,m}}(X_i,Y_{i,j})\leq \inf_{\theta\in\Theta}\frac{1}{nm}\sum_{i=1}^n \sum_{j=1}^m g_{\theta}(X_i,Y_{i,j})+\epsilon_{n,m}^{\text{opt}} \,\,\,\,\,\mathbb{P}\text{-a.s.}\,,
 \end{align}
 where $\epsilon_{n,m}^{\text{opt}}\geq 0$ and $g_\theta$ is as defined in \eqref{eq:DSM_g_theta}.

  Let    $s\in(0,1)$ and   $\rho>0$   such that $\rho m>1$.  Then     for all $\epsilon>0$  we have the following statistical error bounds for DSM:
 \begin{align}\label{eq:DSM_conc_ineq}
 &\mathbb{P}\left(E_{P_X\times P_Y}[g_{\theta^*_{n,m}}]- \inf_\theta E_{P_X\times P_Y}[g_\theta]\geq \epsilon+2C_{n,m}+\epsilon_{n,m}^{opt}\right)\\
         \leq&  \exp\left(-n\!\sup_{\lambda\in[ 0,\lambda_{\max})}\!\left\{\lambda \epsilon/2-   
         \frac{2 (1-(\rho m)^{-1})^{-1}(C\lambda)^2}{1-(C (1-(\rho m)^{-1})^{-1}\lambda)^2} 
         - m^{-1}\left(\frac{ 2(\alpha \lambda)^2}{1-\rho(\alpha\lambda)^2}+\frac{2(\beta\lambda )^{2}}{1- ( m^{-1}\beta\lambda)^{2}}\right)\right\}\right)\,,\notag
       \end{align}  
   where $C_{n,m}$ was defined in \eqref{C_nm_def},
         \begin{align}   
                 \lambda_{\max} \coloneqq & \min\left\{ m\beta^{-1}, \left(\rho\alpha^2\right)^{-1/2} , (1-(\rho m)^{-1})C^{-1}\right\}\,,\\
         \alpha\coloneqq &s^{-1}\|r_{\mathcal{X}}\|_{\Phi_2,P_{X}} \|k_{\mathcal{Y}}\|_{\Phi_2,P^2_{Y}}\,,\,\,\,\,
         \beta\coloneqq  ({1-s})^{-1}\|h_{0,\mathcal{Y}}\|_{\Phi_{1},P^2_{Y}}\,,
                \end{align}
 and  the relevant quantities needed to compute these bounds explicitly are defined and shown to be finite in the course of the proof below.  
 \end{theorem}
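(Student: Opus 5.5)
The plan is to verify the hypotheses of Theorem \ref{thm:stoch_opt}, namely Assumption \ref{assump:ULLN} with $\mathcal{G}=\{g_\theta\}$ and Assumptions \ref{assump:mean_bound}, \ref{assump:stoch_opt_nonuniform_bound}, for the DSM objective \eqref{eq:DSM_g_theta}. In the course of this we produce explicit $h_{\mathcal{X}}$, $h_{0,\mathcal{Y}}$, $k_{\mathcal{Y}}$, $r_{\mathcal{X}}$, and an MGF function of the form $\xi_X(\lambda)=2(C\lambda)^2/(1-(C\lambda)^2)$ with $K_X=C^{-1}$. Substituting into Theorem \ref{thm:stoch_opt}, the term $(1-(\rho m)^{-1})\xi_X((1-(\rho m)^{-1})^{-1}\lambda)$ becomes exactly the displayed $\frac{2(1-(\rho m)^{-1})^{-1}(C\lambda)^2}{1-(C(1-(\rho m)^{-1})^{-1}\lambda)^2}$, and $\lambda_{\max}$ matches.

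\textbf{Step 1: difference bounds.} Write $z_j=e^{-ct_j}x+\widetilde\sigma_{t_j}y$ and $u_j=s^\theta(z_j,t_j)+\widetilde\sigma_{t_j}^{-1}y$. Use $|\|a\|^2-\|b\|^2|\le \|a-b\|(\|a\|+\|b\|)$, the Lipschitz property in $z$ (item 3), and linear growth (item 2).

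For $x$-differences: $\|u_j(x,y)-u_j(\tilde x,y)\|\le L_{t_j}e^{-ct_j}\|x-\tilde x\|$, and $\|u_j\|\le a_{t_j}+b_{t_j}(e^{-ct_j}\|x\|+\widetilde\sigma_{t_j}\|y\|)+\widetilde\sigma_{t_j}^{-1}\|y\|$. Hence $h_{\mathcal{X}}(x,\tilde x,y)$ is a sum of terms bounded by $\|x-\tilde x\|$ times a constant plus $\|x\|+\|\tilde x\|+\|y\|$. Its $P_Y$-average is then $\le c_1\|x-\tilde x\|(1+\|x\|+\|\tilde x\|)\le c_2(1+\|x\|^2+\|\tilde x\|^2)$. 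Since $\|x\|$ is sub-Gaussian, $\|x\|^2$ is sub-exponential, so $E_{P_Y}[h_{\mathcal{X}}]$ has finite $\Phi_1$ norm under $P_X^2$ (triangle inequality for Orlicz norms). The $k=1$ case of Lemma \ref{lemma:Orlicz_cosh_MGF_bound_sum}, together with the series computation for the $\Phi_1$ term in the proof of Theorem \ref{thm:ULLN_bound_non_unif_MGF_Phi1_Phi2_case} (taking $s_1=1$), gives $\int\cosh(\lambda E_{P_Y}[h_{\mathcal{X}}])dP_X^2\le 1+\frac{2(C\lambda)^2}{1-(C\lambda)^2}\le\exp(\cdot)$ with $C=\|E_{P_Y}[h_{\mathcal{X}}]\|_{\Phi_1,P_X^2}$. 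This yields the claimed $\xi_X$.

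For $y$-differences: $\|u_j(x,y)-u_j(x,\tilde y)\|\le (L_{t_j}\widetilde\sigma_{t_j}+\widetilde\sigma_{t_j}^{-1})\|y-\tilde y\|$. Multiplying by $\|u_j(x,y)\|+\|u_j(x,\tilde y)\|$ splits the bound into a part with no $x$, namely $\lesssim \|y-\tilde y\|(1+\|y\|+\|\tilde y\|)$, which becomes $h_{0,\mathcal{Y}}$, and a part $\lesssim \|x\|\,\|y-\tilde y\|$. The latter gives $r_{\mathcal{X}}(x)=\|x\|$ and $k_{\mathcal{Y}}(y,\tilde y)=c\|y-\tilde y\|$. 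Gaussianity gives $\|k_{\mathcal{Y}}\|_{\Phi_2}<\infty$ and $\|h_{0,\mathcal{Y}}\|_{\Phi_1}<\infty$ (products of sub-Gaussians are sub-exponential), while item 4 gives $\|r_{\mathcal{X}}\|_{\Phi_2}<\infty$. $L^1$ integrability of $h_{\mathcal{X}}$ is immediate from these moment bounds.

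\textbf{Step 2: Assumptions \ref{assump:mean_bound} and \ref{assump:stoch_opt_nonuniform_bound}.} Take $c_\theta=0$ and $\psi=0$, since $g\ge0$. The envelope $h(x,y)=\sum_j\gamma_j(a_{t_j}+b_{t_j}\|z_j\|+\widetilde\sigma_{t_j}^{-1}\|y\|)^2$ is integrable. For the Lipschitz constant in $\theta$, bound $|g_\theta-g_{\theta'}|\le\sum_j\gamma_j\|s^\theta-s^{\theta'}\|(\|u_j^\theta\|+\|u_j^{\theta'}\|)$, giving $L(x,y)=\sum_j2\gamma_j(\alpha_{t_j}+\beta_{t_j}\|z_j\|^{\tilde q})(a_{t_j}+b_{t_j}\|z_j\|+\widetilde\sigma^{-1}_{t_j}\|y\|)$, which is measurable and positive after adding a constant if needed. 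This $L$ lies in $L^2$ because all polynomial moments of sub-Gaussian $\|x\|$ and Gaussian $\|y\|$ are finite, so $C_{n,m}$ is finite. Separability lets us invoke the countable reduction. Measurability of $g$ follows from that of $s$.

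\textbf{Main obstacle.} The delicate point is obtaining a decomposition of $h_{\mathcal{Y}}$ in which every $x$-dependence enters only as a factor $r_{\mathcal{X}}(x)$ multiplying a sub-Gaussian function of $(y,\tilde y)$, since cross terms like $\|x\|\|y\|$ must not land in $h_{0,\mathcal{Y}}$. I will handle this by bounding $\|u_j(x,y)\|+\|u_j(x,\tilde y)\|\le A_j+B_j\|x\|+D_j(\|y\|+\|\tilde y\|)$ before multiplying by $\|y-\tilde y\|$, which cleanly separates the two pieces. A second check is that $\xi_X$ is finite only on $[0,C^{-1})$, which is handled by the $K_X$ entry in $\lambda_{\max}$.
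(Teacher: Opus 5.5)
Your proposal is correct and follows essentially the same route as the paper: you verify the hypotheses of Theorem~\ref{thm:stoch_opt} using the same envelope $h$, the same Lipschitz function $L$ and the same $h_{\mathcal{X}}$, and you use the $k=1$ case of Lemma~\ref{lemma:Orlicz_cosh_MGF_bound_sum} to get $\xi_X(\lambda)=2(C\lambda)^2/(1-(C\lambda)^2)$ with $K_X=1/C$. The only difference is cosmetic: you put the $a_{t_j}$-dependent part of the $y$-difference bound into $h_{0,\mathcal{Y}}$ and take $r_{\mathcal{X}}(x)=\|x\|$, whereas the paper absorbs it into $r_{\mathcal{X}}(x)\propto a_{t_j}+b_{t_j}e^{-ct_j}\|x\|$; both splittings meet the required $\Phi_1$/$\Phi_2$ finiteness conditions and only change the numerical values of $\alpha$ and $\beta$.
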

 \begin{remark}
   As previously noted, for appropriate finite dimensional parameterizations the error decays as $C_{n,m}=O(n^{-1/2})$; see Lemma \ref{lemma:mean_bound_SIMODS}.  The supremum on the right-hand side of \eqref{eq:DSM_conc_ineq}  is reminiscent of those appearing in Bernstein-type concentration inequalities, apart from the denominators involving $\lambda^2$ in place of $\lambda$.  However, the asymptotic behavior for small $\epsilon$ is the same, leading to an upper bound of the form $\exp(-n\tilde{A}\epsilon^2)$ for some $\tilde{A}\in(0,\infty)$.   Hence, for any $\delta\in(0,1)$ our result implies an error bound of the form
   \begin{align}
   E_{P_X\times P_Y}[g_{\theta^*_{n,m}}]- \inf_\theta E_{P_X\times P_Y}[g_\theta]=O(n^{-1/2})+\epsilon_{n,m}^{opt}\,,
   \end{align}
   with probability at least $\delta$.    As previously discussed in  Remarks \ref{remark:ULLNlarge_m_limit} and \ref{remark:C_nm_behavior}, the bound \eqref{eq:DSM_conc_ineq}  also   captures the benefits of sample reuse in that for large $m$, to leading order, the result depends on the $x$-difference bounds, with the $Y$-dependence    integrated out (see \eqref{eq:DSM_A_B_def}).
 \end{remark}
 \begin{proof}

 We will show that the assumptions of Theorem \ref{thm:stoch_opt} hold.  First we rewrite
 \begin{align}
 g_\theta(x,y)
 =& \sum_{j=1}^J \gamma_j \left(\|s^{\theta}( e^{-ct_j}x+\widetilde{\sigma}_{t_j}y,t_j)\|^2+2\widetilde{\sigma}^{-1}_{t_j} s^{\theta}( e^{-ct_j}x+\widetilde{\sigma}_{t_j}y,t_j)\cdot y+\widetilde{\sigma}^{-2}_{t_j}\|y\|^2\right)\,.
 \end{align}
  From this we can compute the growth bound
   \begin{align}
  |g_\theta(x,y)|
  \leq&\sum_{j=1}^J \gamma_j \!\left((a_{t_j}+b_{t_j}\| e^{-ct_j}x+\widetilde{\sigma}_{t_j}y\|)^2+2\widetilde{\sigma}^{-1}_{t_j}(a_{t_j}+b_{t_j}\| e^{-ct_j}x+\widetilde{\sigma}_{t_j}y\|)\|y\|+\widetilde{\sigma}^{-2}_{t_j}\|y\|^2\right)\\
  &\coloneqq h(x,y)\in L^1(P_{X}\times P_Y)\,,\notag
  \end{align}
   and the Lipschitz bound  
  \begin{align}
  |g_{\theta_1}(x,y)-g_{\theta_2}(x,y)|	
  \leq&  \sum_{j=1}^J \gamma_j\left| \|s^{\theta_1}( e^{-ct_j}x+\widetilde{\sigma}_{t_j}y,t_j)\|^2- \|s^{\theta_2}( e^{-ct_j}x+\widetilde{\sigma}_{t_j}y,t_j)\|^2\right|\\
  & + 2\sum_{j=1}^J \gamma_j\widetilde{\sigma}^{-1}_{t_j}\| s^{\theta_1}( e^{-ct_j}x+\widetilde{\sigma}_{t_j}y,t_j) - s^{\theta_2}( e^{-ct_j}x+\widetilde{\sigma}_{t_j}y,t_j)\|\|y\|\notag\\
   \leq&  2\sum_{j=1}^J \gamma_j(a_{t_j}+b_{t_j}\|e^{-ct_j}x+\widetilde{\sigma}_{t_j}y\|)(\alpha_{t_j}+\beta_{t_j}\| e^{-ct_j}x+\widetilde{\sigma}_{t_j}y\|^{\tilde{q}})d_\Theta(\theta_1,\theta_2)\notag\\
   & + 2\sum_{j=1}^J \gamma_j\widetilde{\sigma}^{-1}_{t_j} (\alpha_{t_j}+\beta_{t_j}\| e^{-ct_j}x+\widetilde{\sigma}_{t_j}y\|^{\tilde{q}})\|y\|d_\Theta(\theta_1,\theta_2)  \,.\notag
   \end{align}  
   Hence $\theta\mapsto g_\theta(x,y)$ is $L(x,y)$-Lipschitz, where 
     \begin{align}\label{eq:L_DSM_def}
     L(x,y)\coloneqq&  2\sum_{j=1}^J \gamma_j(a_{t_j}+b_{t_j}\|e^{-ct_j}x+\widetilde{\sigma}_{t_j}y\|)(\alpha_{t_j}+\beta_{t_j}\| e^{-ct_j}x+\widetilde{\sigma}_{t_j}y\|^{\tilde{q}}) \\
       & + 2\sum_{j=1}^J \gamma_j\widetilde{\sigma}^{-1}_{t_j} (\alpha_{t_j}+\beta_{t_j}\| e^{-ct_j}x+\widetilde{\sigma}_{t_j}y\|^{\tilde{q}})\|y\|\notag\,.
    \end{align}
  Therefore we have proven that Assumption \ref{assump:stoch_opt_nonuniform_bound} holds. We also note that  $L\in L^2(P_X\times P_Y)$ due to $P_Y$ being Gaussian and $\|x\|$ being sub-Gaussian under $P_X$.

  Next we verify that  Assumption \ref{assump:ULLN} holds for $\mathcal{G}\coloneqq\{g_\theta:\theta\in\Theta\}$. First we compute
  \begin{align}
  &|g_\theta(x,y)-g_\theta({x},\tilde{y})|\\
  \leq&\sum_{j=1}^J \gamma_j\|s^{\theta}( e^{-ct_j}x+\widetilde{\sigma}_{t_j}y,t_j)-s^{\theta}( e^{-ct_j}x+\widetilde{\sigma}_{t_j}\tilde{y},t_j)+\widetilde{\sigma}^{-1}_{t_j}(y-\tilde{y})\|\notag\\
  &\times\left(\|s^{\theta}( e^{-ct_j}x+\widetilde{\sigma}_{t_j}y,t_j)+\widetilde{\sigma}^{-1}_{t_j}y\|+\|s^{\theta}( e^{-ct_j}x+\widetilde{\sigma}_{t_j}\tilde{y},t_j)+\widetilde{\sigma}^{-1}_{t_j}\tilde{y}\|\right)\notag\\
   \leq&\sum_{j=1}^J \gamma_j\left(\widetilde{\sigma}_{t_j} L_{t_j}
   +\widetilde{\sigma}^{-1}_{t_j}\right)\left(2a_{t_j}+2b_{t_j}e^{-ct_j}\|x\|+(b_{t_j}\widetilde{\sigma}_{t_j} +\widetilde{\sigma}^{-1}_{t_j})(\|y\|+\|\tilde{y}\|)\right)\| y- \tilde{y}\| \notag\\
     =&h_{0,\mathcal{Y}}(y,\tilde{y}) +r_{\mathcal{X}}(x)k_{\mathcal{Y}}(y,\tilde{y})\notag\,,
     \end{align}
     where
     \begin{align}\label{eq:r_h0_k_def}
     r_{\mathcal{X}}(x)\coloneqq &\sum_{j=1}^J \gamma_j\left(\widetilde{\sigma}_{t_j} L_{t_j}
     +\widetilde{\sigma}^{-1}_{t_j}\right)2(a_{t_j}+b_{t_j}e^{-ct_j}\|x\|)\,,\,\,\,\,\,k_{\mathcal{Y}}(y,\tilde{y})\coloneqq \| y- \tilde{y}\|\,,\\
     h_{0,\mathcal{Y}}(y,\tilde{y})    \coloneqq &\sum_{j=1}^J \gamma_j\left(\widetilde{\sigma}_{t_j} L_{t_j}
     +\widetilde{\sigma}^{-1}_{t_j}\right)(b_{t_j}\widetilde{\sigma}_{t_j} +\widetilde{\sigma}^{-1}_{t_j})(\|y\|+\|\tilde{y}\|)\| y- \tilde{y}\|\notag\,.
      \end{align}
      Note that $\|y\|$ is sub-Gaussian under  $P_Y=N(0,I_d)$, hence $\|y\|+\|\tilde{y}\|$ and $\|y-\tilde{y}\|$ are sub-Gaussian under $P_Y^2$. Therefore $\|k_{\mathcal{Y}}\|_{\Phi_2,P_Y^2}<\infty$. This also implies that $h_{0,\mathcal{Y}}$ is sub-exponential  under $P_Y^2$, and hence   $\|h_{0,\mathcal{Y}}\|_{\Phi_1,P_Y^2}<\infty$.  Finally, we have assumed that $\|x\|$ is sub-Gaussian under $P_X$, hence $\|r_{\mathcal{X}}\|_{\Phi_2,P_X}<\infty$.

 Finally, we compute
 \begin{align}
 &|g(x,y)-g(\tilde{x},y)|\\
 \leq& \sum_{j=1}^J \gamma_j\|s^{\theta}( e^{-ct_j}x+\widetilde{\sigma}_{t_j}y,t_j)-s^{\theta}( e^{-ct_j}\tilde{x}+\widetilde{\sigma}_{t_j}y,t_j)\|\notag\\
 &\times\left(\|s^{\theta}( e^{-ct_j}x+\widetilde{\sigma}_{t_j}y,t_j)+\widetilde{\sigma}^{-1}_{t_j}y\|+\|s^{\theta}( e^{-ct_j}\tilde{x}+\widetilde{\sigma}_{t_j}y,t_j)+\widetilde{\sigma}^{-1}_{t_j}y\|\right)\notag\\
 \leq& \sum_{j=1}^J \gamma_j  L_{t_j}e^{-ct_j}\left( 2a_{t_j}+2(b_{t_j}\widetilde{\sigma}_{t_j}+ \widetilde{\sigma}^{-1}_{t_j})\|y\|+b_{t_j}e^{-ct_j}(\|x\|+\|\tilde{x}\|)\right)\|  x- \tilde{x}\|\notag\\
 \coloneqq& h_{\mathcal{X}}(x,\tilde{x},y)\,.\notag
 \end{align}
 We have $h_{\mathcal{X}}\in L^1(P_X\times P_X\times P_Y)$ and  
 \begin{align}\label{eq:DSM_h_X_MGF_bound}
 &\int \cosh(\lambda E_{P_Y}[h_{\mathcal{X}}(x,\tilde{x},\cdot)])P_X^2(dxd\tilde{x})\\
 =&\int \cosh\big(\lambda\left(
 A\|  x- \tilde{x}\|+B(\|x\|+\|\tilde{x}\|)\|  x- \tilde{x}\|\right)
  \big)P_X^2(dxd\tilde{x})\notag\\
  \leq& \exp\left( \frac{2(C\lambda)^2}{1-(C\lambda)^2} \right) \notag
  \end{align}
  for all $\lambda<1/C$,
  where 
   \begin{align}\label{eq:DSM_A_B_def}
     A\coloneqq & \sum_{j=1}^J \gamma_j  L_{t_j}e^{-ct_j} \left(2a_{t_j}+2(b_{t_j}\widetilde{\sigma}_{t_j}+ \widetilde{\sigma}^{-1}_{t_j})E_{P_Y}[\|y\|]\right)\,,\,\,\,\,
     B\coloneqq \sum_{j=1}^J \gamma_j  L_{t_j}b_{t_j}e^{-2ct_j} \,,\\
     C\coloneqq &\Big\|A\|  x- \tilde{x}\|+B(\|x\|+\|\tilde{x}\|)\|  x- \tilde{x}\|\Big\|_{\Phi_1,P_X^2(dxd\tilde{x})}\,.\notag
      \end{align}
      Note that the sub-Gaussian assumption on $\|x\|$ implies that $C<\infty$.     
  To obtain the bound \eqref{eq:DSM_h_X_MGF_bound},  we used the $k=1$ case of Lemma \ref{lemma:Orlicz_cosh_MGF_bound_sum}, which was previously derived in Corollary 2.4 of \cite{birrell2025concentration}.
  
  Therefore we have verified that Assumption \ref{assump:ULLN}  holds with  $ \xi_X(\lambda)\coloneqq 
   \frac{2(C\lambda)^2}{1-(C\lambda)^2}$,  $K_X\coloneqq 1/C$.  The claimed concentration inequality then follows from Theorem \ref{thm:stoch_opt}.
  \end{proof}

 As shown in   \cite{hyvarinen2005estimation,vincent2011connection}, the difference between the   score model $s^\theta(x,t)$  and the exact score $s(x,t)$ can be related to the approximation error and the DSM objective studied above via
  \begin{align}
&\sum_{j=1}^J \gamma_j{ E}_{Z_{t_j}}\left[ \|s^{\theta}(\cdot,t_j)-s(\cdot,t_j)\|^2\right]-  \inf_{\theta\in\Theta} \sum_{j=1}^J \gamma_j{ E}_{ Z_{t_j}}\!\!\left[ \|s^\theta(\cdot,t_j)-s(\cdot,t_j)\|^2\right]\\
=& E_{P_{X}\times P_Y}\left[g_\theta\right]
  -\inf_{\theta\in\Theta}{E}_{P_{X}\times P_Y}\left[g_\theta\right]\,.\notag
  \end{align}
 The  statistical  error  bounds on the DSM objective from Theorem \ref{thm:DSM} therefore directly translate into excess  error bounds on the score.  Moreover, such bounds also feed into the assumptions made by convergence analyses such as those in \cite{chen2023improved,bentonnearly,li2024towards,li2024sharp,potaptchik2024linear,chen2024equivariant,mimikos2024score}. A related result to our Theorem \ref{thm:DSM} was proven in Theorem 5.1 of \cite{birrell2025concentration}. However, that result required the support of $P_{X}$ to be bounded.  The new tools developed in this paper allow for that restrictive assumption to be  replaced by the much weaker assumption that $\|x\|$ is sub-Gaussian under $P_X$; this greatly extends the domain of applicability of the theory.  We do acknowledge that the remaining Lipschitz and growth assumptions     do not apply to all  implementations used in practice, though  they can be enforced by  architectural choices, including spectral normalization \cite{miyato2018spectral}.

  \section{Conclusion}
  
  We have derived a novel generalization of McDiarmid's inequality, Theorem \ref{thm:gen_McDiarmid_non_unif_MGF}, which    allows for a more general form of non-uniform one-component mean-difference bounds than were covered by previous related results and also captures the benefits of   training sample reuse in algorithms that employ easily-sampled auxiliary random variables.  In support of this result, we also developed a tool for computing the required non-uniform MGF bounds via Orlicz norms; see Section \ref{sec:Orlicz} and Theorem \ref{thm:ULLN_bound_non_unif_MGF_Phi1_Phi2_case}. Using these results, in Section \ref{sec:ULLN_stoch_opt} we derived a novel ULLN result as well as statistical error bounds on stochastic optimization problems. In Section \ref{sec:DSM} we applied these tools to derive statistical error bounds on score-based  generative modeling via denoising score matching, an application which naturally leverages the key features of our new tools;   our results on DSM  capture the benefit of training sample reuse and apply to     unbounded objectives  and data distributions with unbounded support.

\bibliographystyle{siamplain}
\bibliography{refs}

\end{document}